\documentclass{article} 
\usepackage{nips15submit_e,times}

\usepackage{url}

\usepackage{times}
\usepackage{epsfig}
\usepackage{graphicx}
\usepackage{amsmath}
\usepackage{amssymb}
\usepackage{bbold}
\usepackage{dsfont}

\usepackage{enumerate}

\usepackage[toc,page]{appendix}
\usepackage[font=small,labelfont=bf]{caption}

\usepackage{xcolor}
\usepackage{amsthm}
\usepackage{amsfonts}
\usepackage{caption}
\usepackage{subcaption}
\usepackage{bm}
\usepackage{isomath}
\usepackage[numbers]{natbib}
\usepackage{footmisc}
\usepackage{stmaryrd}
\usepackage{fixltx2e}
\usepackage{dblfloatfix}
\usepackage{pbox}
\usepackage{capt-of}
\usepackage{multirow}

\newcommand{\xpt}{\edef\f@size{\@xpt}\rm}

\nipsfinaltrue

\def\ie{\emph{i.e.}}

\renewcommand\vec[1]{\ensuremath\boldsymbol{#1}}
\renewcommand\cdots{...}

\newcommand{\tY}{\vec{\mathcal{Y}}}

\newcommand{\vb}{\mathbf{b}}
\newcommand{\vy}{\mathbf{y}}

\newcommand{\tX}{\vec{\mathcal{X}}}

\newcommand{\mW}{\mathbf{W}}
\newcommand{\vx}{\mathbf{x}}

\newcommand{\mbr}[1]{\mathbb{R}^{#1}}

\newcommand{\idx}[1]{\mathcal{I}_{#1}}
\newcommand{\semipd}[1]{\mathcal{S}_{+}^{#1}}

\newcommand{\vzeta}{\boldsymbol{\zeta}}
\newcommand{\vzetabar}{\boldsymbol{\bar{\zeta}}}

\newcommand{\vphi}{\boldsymbol{\phi}}

\newcommand{\bigoh}{\mathcal{O}}

\newcommand{\vj}{\vec{j}}


\newcommand{\fnorm}[1]{\left\|{#1}\right\|_F}

\newcommand{\set}[1]{\left\{#1\right\}}

\DeclareMathOperator*{\argmin}{arg\,min}

\DeclareMathOperator*{\kronstack}{\uparrow\!\otimes}

\newcommand{\suptensorr}[2]{\mathfrak{S}^{#1}_{\times^{#2}}}

\newtheorem{definition}{Definition}

\newtheorem{proposition}{Proposition}

\def\eg{\emph{e.g.}}

\newcommand{\vOnes}{\mathbb{1}}

\newcommand{\overbar}[1]{\mkern 4mu\overline{\mkern-4mu#1\mkern-4mu}\mkern 4mu}
\newcommand{\overbartwo}[1]{\mkern 2mu\overline{\mkern-2mu#1\mkern-2mu}\mkern 2mu}

\newcommand{\mK}{\mathbf{K}}
\newcommand{\mKb}{\overbar{\mK}}
\newcommand{\mKbb}{\overbar{\mKb}}
\newcommand{\Kb}{\overbar{K}}
\newcommand{\Kbb}{\overbar{\Kb}}

\newcommand{\mKro}{{\mK^{q}}}
\newcommand{\mKbro}{{\mKb{\,\!}^{q}}}
\newcommand{\mKbbro}{{\mKbb^{q}}}

\newcommand{\cov}{\boldsymbol{\Sigma}}

\newcommand{\mPhi}{\boldsymbol{\Phi}}
\newcommand{\mLam}{\boldsymbol{\Lambda}}

\newcommand{\vmu}{\boldsymbol{\mu}}

\newcommand{\mP}{\boldsymbol{\Theta}}

\newcommand{\tXnb}{\mathcal{X}}
\newcommand{\tYnb}{\mathcal{Y}}

\newcommand{\dsAW}{$\mathcal{A}\!\!\shortrightarrow\!\mathcal{W}$}
\newcommand{\dsAD}{$\mathcal{A}\!\!\shortrightarrow\!\!\mathcal{D}$}
\newcommand{\dsWA}{$\mathcal{W}\!\!\shortrightarrow\!\!\mathcal{A}$}
\newcommand{\dsWD}{$\mathcal{W}\!\!\shortrightarrow\!\!\mathcal{D}$}
\newcommand{\dsDA}{$\mathcal{D}\!\!\shortrightarrow\!\!\mathcal{A}$}
\newcommand{\dsDW}{$\mathcal{D}\!\!\shortrightarrow\!\!\mathcal{W}$}

\title{\Large Domain Adaptation by Mixture of Alignments of Second- or Higher-Order Scatter Tensors}

\author{Piotr Koniusz\thanks{Both authors contributed equally.\newline .$\quad\!$This paper is accepted at CVPR'17.} \textsuperscript{ ,1,2}\qquad\qquad Yusuf Tas\textsuperscript{$*$,1,2}\qquad\qquad Fatih Porikli\textsuperscript{2}\\
$^1$Data61/CSIRO, $^2$Australian National University\\
firstname.lastname@\{data61.csiro.au, anu.edu.au\}
}


\newcommand\keywords[1]{}
\setlength{\parskip}{0.4em}

\nipsfinalcopy 

\begin{document}

\maketitle

\begin{abstract}
In this paper, we propose an approach to the domain adaptation, dubbed Second- or Higher-order Transfer of Knowledge ({So-HoT}), based on the mixture of alignments of second- or higher-order scatter statistics between the source and target domains. The human ability to learn from few labeled samples is a recurring motivation in the literature for domain adaptation. Towards this end, we investigate the supervised target scenario for which few labeled target training samples per category exist. 
Specifically, we utilize two CNN streams: the source and target networks fused at the classifier level. Features from the fully connected layers fc7 of each network are used to compute second- or even higher-order scatter tensors; one per network stream per class. 
As the source and target distributions are somewhat different despite being related, we align the scatters of the two network streams of the same class (within-class scatters) to a desired degree with our bespoke loss while maintaining good separation of the between-class scatters. 
We train the entire network in end-to-end fashion. We provide evaluations on the standard Office benchmark (visual domains), RGB-D combined with Caltech256 (depth-to-rgb transfer) and Pascal VOC2007 combined with the TU Berlin dataset (image-to-sketch transfer). 
We attain state-of-the-art results.
\end{abstract}


\section{Introduction}
\label{sec:intro}
Domain adaptation and transfer learning  are the problems widely studied in computer vision and machine learning communities~\cite{transfer_workshop_1995, transfer_workshop_2016}. They are directly inspired by the human cognitive abilities of generalizing to new concepts from very few data samples (cf. training from scratch on over a million of labeled images of the ImageNet dataset~\cite{ILSVRC15}). From psychological point of view, transfer of learning is ``{\em the dependency of human conduct, learning or performance on prior experience''}. This problem was introduced in 1901 under a notion of ``{\em transfer of particle}''~\cite{woodworth_particle}. In machine learning, 
transfer learning (or inductive learning) concerns {\em ``storing knowledge gained while solving one problem and applying it to a different but related problem''}~\cite{west_ml_transfer_def}. In practical computer vision and machine learning systems, transfer learning refers to {\em ``an ability of a system to recognize and apply knowledge and skills learned in previous tasks to novel tasks or new domains, which share some commonality''}. In general, given a new (target) task, the arising question is how to identify the commonality between this task and previous (source) tasks, and transfer knowledge from the previous tasks to the target one. Therefore, one has to address three questions: what to transfer, how to transfer, and when to transfer~\cite{tommasi_cvpr10}.

In what follows, we propose an approach to the domain adaptation, dubbed Second- or Higher-order Transfer of Knowledge ({\em So-HoT}), based on the mixture of alignments of second- and/or higher-order scatter statistics between the source and target domains. Specifically, we utilize second- and/or higher-order scatter tensors, one per each network stream per class, such that the first stream corresponds to the source domain while the second to the target. The scatters are built from the feature vectors produced by the {\em fc7} layer of AlexNet~\cite{krizhevsky_alexnet}. We propose that, as the source and target distributions are only partially related by their commonality, the scatters of the same class from both streams ({\em within-class scatters}) should be aligned to a desired degree to capture this commonality as an overlap between parts of the two distributions. At the same time, to achieve high classification accuracy, we maintain good separation between the scatters representing different classes ({\em between-class scatters}).
We devise a simple loss that brings each pair of within-class scatters closer in terms of their covariances as well as their corresponding means. Therefore, the CNN parameters stored by convolutional filters and weights of the target network regularized by the source data in this end-to-end fashion must produce statistics consistent with the source network. We view such a regularization paradigm as being motivated by the theory of  privileged learning~\cite{vapnik_privileged}. In our case, the statistics of the source network regularize the target (and vice-versa) whilst in the privileged learning, the side information regularizes the solution dictated by the empirical loss evaluated on the main data samples. See Figures \ref{fig:align_all} and \ref{fig:cnn_all} for illustrative examples.

\begin{figure*}[t]
\centering
%
%
\begin{subfigure}[b]{0.189\linewidth}
\centering\includegraphics[trim=0 0 0 0, clip=true, height=2.5cm]{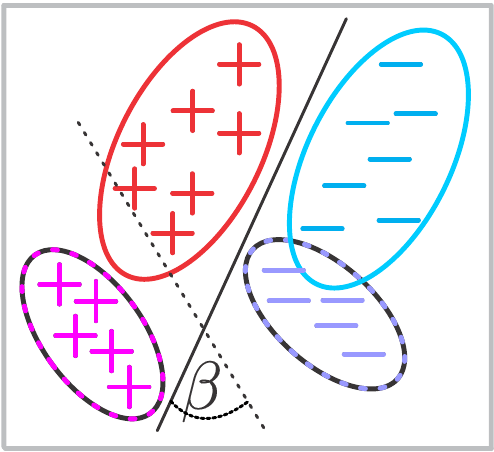}
\caption{\label{fig:align1}}
\end{subfigure}
\begin{subfigure}[b]{0.189\linewidth}
\centering\includegraphics[trim=0 0 0 0, clip=true, height=2.5cm]{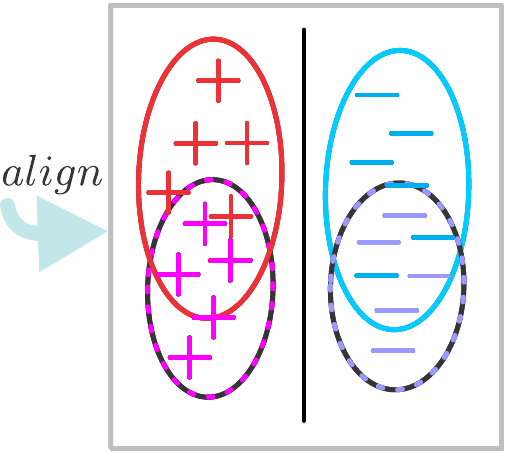}
\caption{\label{fig:align2}}
\end{subfigure}
\hspace{0.3cm}
\begin{subfigure}[b]{0.189\linewidth}
\centering\includegraphics[trim=0 0 0 0, clip=true, height=2.5cm]{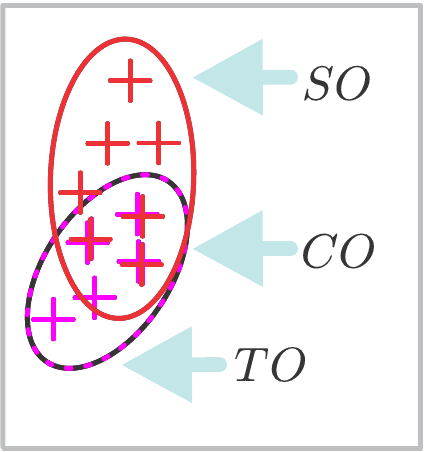}
\caption{\label{fig:align3}}
\end{subfigure}
\begin{subfigure}[b]{0.189\linewidth}
\centering\includegraphics[trim=0 0 0 0, clip=true, height=2.5cm]{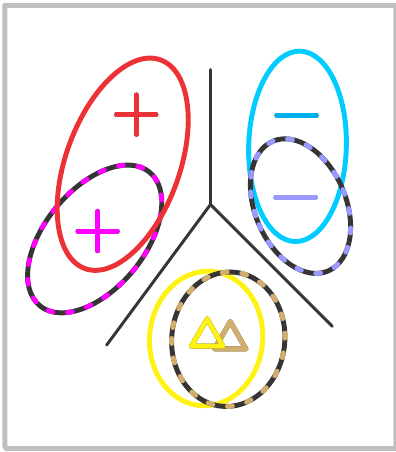}
\caption{\label{fig:align4}}
\end{subfigure}
\begin{subfigure}[b]{0.189\linewidth}
\centering\includegraphics[trim=0 0 0 0, clip=true, height=2.5cm]{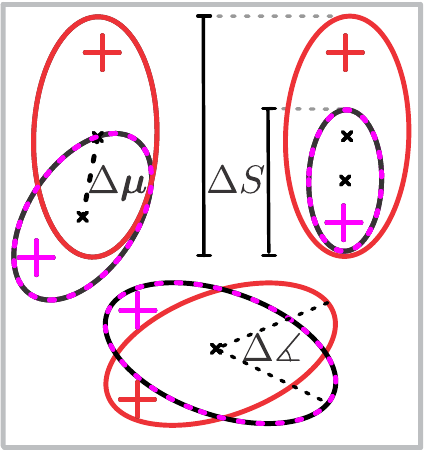}
\caption{\label{fig:align5}}
\end{subfigure}
\vspace{-0.3cm}
\caption{Our alignment problem. In Figure \ref{fig:align1}, a two-class toy problem with positive and negative samples ($+$) and ($-$) is given. The solid and dashed ellipses indicate the source and target domain distributions. The two hyperplane lines that separate ($+$) from ($-$) on the target data indicate large uncertainty (denoted as $\beta$) in the optimal orientation for the target problem. Figure \ref{fig:align2} shows that the source and target distributions can be aligned enough to separate well two classes for both the source and target problems. Figure \ref{fig:align3} shows that partially aligned distributions have the commonality ({\em CO}) as well as the source and target specific parts ({\em SO}) and ({\em TO}) that represent dissimilarity between the source and target. Figure \ref{fig:align4} depicts a multi-class problem. Beside of partially aligned means, the orientations of the source and target distributions are allowed to partially differ -- as a result, they \ie~fit better into the piece-wise linear decision boundary. Figure~\ref{fig:align5} shows that differences in means $\Delta\vmu$, scale/shear $\Delta S$ and orientation $\Delta\measuredangle$ of within-class scatters are all part of the alignment process.
}\vspace{-0.2cm}
\label{fig:align_all}
\end{figure*}

Furthermore, as distributions of the source and target domains may require different level of alignment per class (the commonality depends on the class label), we investigate not only an unweighted alignment loss (class-independent level of alignment) but also its weighted counterpart which learns one weight per class (class-specific levels of alignment).

Additionally, as we work with second- and/or higher-order tensors, we propose a kernelized variant of our alignment loss which provides computational speed-ups for typical domain adaptation datasets.

To summarize, our main contributions are: i) a novel loss that we call {\em So-HoT}, which defines the commonality between the source and target domains as the mixture of alignments of second- and/or higher-order scatter tensors, ii) unweighted and weighted variants of alignments, and iii) a fast kernelized alternative of our alignment loss.

Next, we detail the notion of domain adaptation and transfer learning, review the related literature and explain how our work differs from the state-of-the-art approaches.

\section{Related Work}
\label{sec:related}
Domain adaptation assumes that the transfer of knowledge takes place among two or more distinct domains \eg, e-commerce reviews and biomedical data. In contrast, transfer learning utilizes the same domain \eg, images of natural scenes with related but different distributions where the goal may be to learn objects of a new class while leveraging other already learned classes~\cite{thrun_nips, tommasi_cvpr10}. Not surprisingly, these both notions are often interchangeable \eg, natural images and sketches have related distributions but they come from distinct domains at the same time. Another example is a so-called domain shift \eg, bicycle in natural images vs. on-line retailer galleries. Transfer of knowledge may vary from simply carrying over discriminative information from a source to target domain under the same set of classes to inferring a solution to a new distinct task from a set of former ones~\cite{thrun_nips,intrator_diverse_tasks}. Domain adaptation comes in many flavors. Single- or multiple-source~\cite{crammer_multisource} setups are possible \eg, single stream of natural images vs. multiple streams supplied with photos of objects: on cluttered backgrounds, on a clear background, in a daytime or night setting, or even in multi-spectral setting. Moreover, the problem in hand may be homogeneous or heterogeneous~\cite{tommasi_cvpr10, yeh_cca_hetero} in nature \eg, identical source and target representations using RGB images vs. a source represented by a CNN trained on images~\cite{fukushima_nn,krizhevsky_alexnet} and a target using an 
LSTM~\cite{hopfield_net,rnn_early} which is trained on text corpora or video data \cite{Herath_Survey17}. The architecture in use may be shallow~\cite{frustrating_domain, frustrating_domain_return} or deep~\cite{ganin_jmlr_adversal} so that the commonality is established only at the classifier level or across entire source and target networks, respectively. Noteworthy is also recent trend in the CNN fine-tuning which by itself is a powerful domain adaptation and transfer learning tool~\cite{girshick_rich_feat,sermanet_overfeat} which requires large training datasets. Moreover, domain adaptation and transfer learning address problems such as: learning new categories from few annotated samples (supervised domain adaptation~\cite{chopra_icml_workshop, tzeng_transfer}), utilizing available unlabeled data (unsupervised~\cite{frustrating_domain_return, ganin_jmlr_adversal,Herath_ILS17} or semi-supervised domain adaptation~\cite{frustrating_domain, tzeng_transfer}), recognizing new categories in embedded spaces \eg, attribute-based, without any training samples (zero-shot learning~\cite{feifei_oneshot}).

In this paper, we investigate the case of a deep supervised single-source domain adaptation which can be easily extended to the multi-source and heterogeneous cases.

\vspace{0.05cm}
\noindent{\textbf{The Commonality.}}
Deep learning \cite{krizhevsky_alexnet,simonyan_vgg,Harandi_GB16} has been used in the context of domain adaptation in recent works \eg,~ \cite{tzeng_transfer, ganin_jmlr_adversal, chopra_icml_workshop, xiong_eccv16, kuzborskij_cvpr16, tommasi_eccv16,icml2015_long15}. These works differ in how they establish the so-called commonality between domains. In~\cite{tzeng_transfer}, the authors propose to align both domains via the cross entropy which ``maximally confuses'' both domains for supervised and semi-supervised settings. In~\cite{ganin_jmlr_adversal}, an unsupervised approach utilizes the assumption that predictions must be made based on features which cannot discriminate between the source and target domains. Specifically, they minimize a trade-off between the so-called source risk and the empirical 
divergence to find examples in the source domain indistinguishable from the target samples.

Our work differs from these methods in that we define the commonality as the desired degree of overlap between the second- and/or higher-order scatters of the source and target. After such an alignment, we let the non-overlapping tails of distributions also guide learning which results in a more general classifier (\ie~avoid the domain-specific bias). 

Moreover, in~\cite{chopra_icml_workshop}, the authors capture the ``interpolating path'' between the source and target domains using linear projections into a low-dimensional subspace which lies on the Grassman manifold. In~\cite{xiong_eccv16}, the authors propose to learn the transformation between the source and target by the deep model regression network. These two approaches assume that the source representation can be interpolated or regressed into the target as, given the nature of CNNs, they can approximate highly non-linear functions.

Our model differs in that our source and target network streams co-regularize each other to produce the commonality between the source and target distributions and accommodate the domain-specific parts that should not be aligned.

For visual domains, the commonality can be captured in the spatially-local sense. In~\cite{tommasi_eccv16}, the authors utilize so-called ``domainness maps'' which capture locally the degree of domain specificity. Similarly, in~\cite{kuzborskij_cvpr16}, the authors extract local patches of varying sizes at process each of these patches via CNNs. Our work is orthogonal to these techniques. We represent the commonality globally, however, our ideas could also be applied in a spatially-local setting. 

\vspace{0.05cm}
\noindent{\textbf{Correlation Methods.}}
Some recent works use correlation between the source and target distributions. Inspired by the Canonical Correlation Analysis (CCA), the authors of \cite{yeh_cca_hetero} utilize a correlation subspace as a joint representation for associating the data across different domains. They also use kernelized CCA. In~\cite{frustrating_domain_return}, the authors propose an unsupervised domain adaptation by the correlation alignment.

Our work is somewhat related in that we utilize second-order statistics. However, we align partially the class-specific source and target distributions to define the commonality (partial intersection of scatters) in the supervised setting. We also align partially the distribution means while the above unsupervised approaches use zero-centered feature vectors and the full alignment of the generic (c.f. class-specific) source and target distributions.
%
%
%
%
We detail how to learn the degree of alignment in an end-to-end fashion and introduce the kernelized loss between the second- and/or higher-order scatter tensors; all being novel propositions. 

\vspace{0.05cm}
\noindent{\textbf{Tensor Methods.}} Correlation approaches outlined above use second-order scatter matrices which are tensors of order $r\!=\!2$. In this work, we also investigate the applicability of higher-order scatters $r\!\geq\!3$ for alignment. Third-order tensors have been found useful for various vision tasks. For example, spatio-temporal third-order tensor on video data is proposed for action analysis in \cite{tensoraction2007}, non-negative tensor factorization is used for image denoising in~\cite{shashua2005non}, tensor textures are proposed for texture rendering in~\cite{vasilescu2004tensortextures}, and  higher order tensors are used for face recognition in~\cite{vasilescu2002multilinear}. A survey of multi-linear algebraic methods for tensor subspace learning is available in~\cite{lu2011survey}. The above applications use a single tensor, while our goal is to use tensors as the domain- and class-specific representations, similar to the sum-kernel approaches~\cite{me_tensor, sparse_tensor_cvpr, me_tensor_eccv16}, and apply them to alignment tasks.



\begin{figure}[t]
\centering
%
\begin{subfigure}[b]{0.99\linewidth}
\centering\includegraphics[trim=0 0 0 0, clip=true, width=8.2cm]{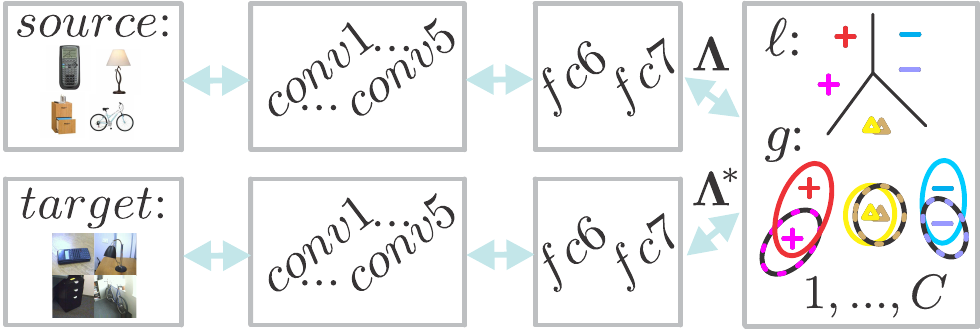}
\caption{\label{fig:cnn1}}
\vspace{-0.1cm}
\end{subfigure}
\begin{subfigure}[b]{0.99\linewidth}
\centering\includegraphics[trim=0 0 0 0, clip=true, width=8.2cm]{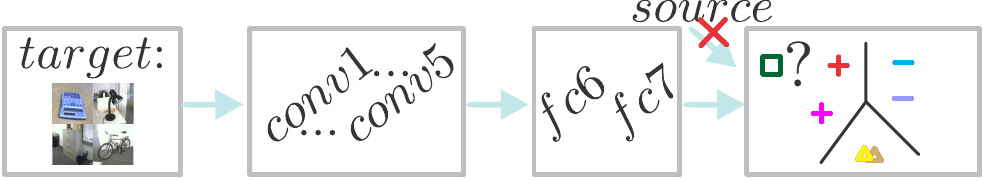}
\caption{\label{fig:cnn2}}
\end{subfigure}
\vspace{-0.2cm}
\caption{The pipeline. Figure \ref{fig:cnn1} shows the source and target network streams which merge at the classifier level. The classification and alignment loss $\ell$ and $g$ take the data  $\mLam$ and $\mLam^{*\!}$ from both streams and participate in end-to-end learning. At the test time, we use the target stream and the trained classifier as in Figure \ref{fig:cnn2}.
}\vspace{-0.3cm}
\label{fig:cnn_all}
\end{figure}

\section{Background}
\label{sec:preliminaries}
In this section, we review notations and the necessary background on scatter tensors, polynomial kernels and their linearizations, which are useful in deriving our mixture of alignments of second- and/or higher-order scatter tensors.

\subsection{Notations}
\label{sec:notations}
Let $\vx\in\mbr{d}$ be a $d$-dimensional feature vector. Then, we use $\tX\!=\!{\kronstack}_r\,\vx$ to denote the $r$-mode super-symmetric rank-one tensor $\tX$ generated by the $r$-th order outer-product of $\vx$, where the element of $\tX\!\in\!\suptensorr{d}{r}$ at the $\left(i_1,i_2,\cdots, i_{r}\right)$-th index is given by $\Pi_{j=1}^r x_{i_j}$. 
$\idx{N}$ stands for the index set $\set{1, 2,\cdots,N}$. We denote 
the space of super-symmetric tensors of dimension $d\!\!\times\!\!\cdots\!\!\times\!\!d$ with $r$ modes as $\suptensorr{d}{r}\!\!\subset\!\!\mbr{\times_r d}$, where $\mbr{\times_r d}$ is the space of tensors $\mbr{d\times\cdots\times d}$ with $r$ modes.
The Frobenius norm of tensor is given by  $\fnorm{\tX}\!\!=\!\!\!\sqrt{\sum\limits_{i_1,i_2,\cdots, i_{r}} \!\!\!\!\tXnb_{i_1,i_2,\cdots, i_{r}}^2}$, where $\tXnb_{i_1,i_2,\cdots, i_{r}}$ represents the $\left(i_1,i_2,\cdots, i_{r}\right)$-th element of $\tX$. Similarly, the inner-product between two tensors $\tX$ and $\tY$ is given by $\left\langle\tX,\tY\right\rangle\!=\!\!\!\!\!\!\sum\limits_{i_1,i_2,\cdots, i_{r}}\!\!\!\!\!\!\!\tXnb_{i_1,i_2,\cdots, i_{r}}\!\cdot\!\tYnb_{i_1,i_2,\cdots, i_{r}}$. Using Matlab style notation, the $(i_3,\cdots,i_r)$-th slice of $\tX$ is given by $\tX_{:,:,i_3,\cdots,i_r}$. The space of  positive semi-definite matrices is $\semipd{d}$. Lastly, $\vOnes$ denotes a vector with all coefficients equal one.

\subsection{Second- or Higher-order Scatter Tensors}
\label{sec:soho}
We define a scatter tensor of order $r$ as a mean-centered TOSST representation~\cite{sparse_tensor_cvpr}:
\begin{definition}
Suppose $\vphi_n\in\mbr{d},\forall n\in\idx{N}$ represent some data vectors, then a scatter tensor $\tX\in\suptensorr{d}{r}$ of order $r$ on these data vectors is given by:
\begin{equation}
\tX = \frac{1}{N}\sum_{n=1}^N{\kronstack}_r\,(\vphi_n-\vmu)\;\;\,\text{ and }\;\;\;\vmu=\frac{1}{N}\sum_{n=1}^N\vphi_n.
\label{eq:scatten}
\end{equation}
\end{definition}
In our supervised domain adaptation setting, the scatter tensors are obtained via applying~\eqref{eq:scatten} on the class-specific data vectors such as outputs of the {\em fc7} layer of AlexNet. When we need to highlight order $r$ of $\tX$, we write $\tX^{(r)}\!$.

The following properties of the scatter tensors are worth noting (see~\cite{sparse_tensor_cvpr} for proofs):
\begin{proposition}
For a scatter tensor $\tX\!\in\!\suptensorr{d}{r}$, we have:
\begin{enumerate}
\item Super-Symmetry: $\tX_{i_1,i_2,\cdots, i_{r}}\!=\!\tX_{\Pi(i_1,i_2,\cdots, i_{r})}$ for indexes $(i_1,i_2,\cdots, i_{r})$ and their any permutation $\Pi$. 
The number of unique coefficients of $\tX$ is $\binom{d+r-1}{r}$. 
\item Every slice is at least positive semi-definite for any even order $r\!\geq\!2$ and $\tX_{:,:,i_3,\cdots, i_{r}}\!\!\!\in\!\semipd{d},  \forall\!\,(i_3,\cdots,i_r)\!\in\!\idx{d}.\!$ 
For $r\!=\!2$, tensor $\tX$ also is a covariance matrix.
\item Indefiniteness for any odd order $r\!\geq\!1$, \ie, under a CP decomposition~\cite{lathauwer_hosvd}, it can have positive, negative, or zero entries in its core-tensor.
\end{enumerate}
\label{prop:tosst_prop}
\end{proposition}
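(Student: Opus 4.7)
My plan is to handle the three claims independently by reducing each to elementary identities on the centered vectors $\widetilde{\vphi}_n := \vphi_n - \vmu$, for which $\tX = \frac{1}{N}\sum_n {\kronstack}_r\,\widetilde{\vphi}_n$. For super-symmetry, the $(i_1,\ldots,i_r)$-entry of each summand is the scalar $\prod_{j=1}^r (\widetilde{\vphi}_n)_{i_j}$, which is invariant under any permutation of indices by commutativity of multiplication; averaging preserves the invariance. The count of distinct entries is the number of size-$r$ multisets drawn from $\idx{d}$, which a standard stars-and-bars argument gives as $\binom{d+r-1}{r}$.

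For the PSD slice claim at even $r$, I would fix the trailing indices in matched pairs (i.e., $i_3=i_4,\, i_5=i_6,\, \ldots$) so that the $(i_1,i_2)$-entry of the slice becomes $\frac{1}{N}\sum_n w_n\,(\widetilde{\vphi}_n)_{i_1}(\widetilde{\vphi}_n)_{i_2}$ with weight $w_n=\prod_k(\widetilde{\vphi}_n)_{i_{2k+1}}^{\,2}\geq 0$. The slice is then $\frac{1}{N}\sum_n w_n\,\widetilde{\vphi}_n\widetilde{\vphi}_n^{\!\top}$, a non-negative combination of rank-one PSD matrices, hence in $\semipd{d}$. The case $r=2$ has no trailing indices, and $\tX$ reduces by direct inspection of~\eqref{eq:scatten} to the sample covariance matrix of $\{\vphi_n\}_{n=1}^N$.

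For odd-order indefiniteness, $r=1$ is immediate since $\sum_n(\vphi_n-\vmu)=\mathbf{0}$. For odd $r\geq 3$, I would exploit the gauge freedom of odd-order rank-one tensors: since ${\kronstack}_r(-\vu)=-{\kronstack}_r\vu$ when $r$ is odd, the natural decomposition $\tX=\sum_n\lambda_n\,{\kronstack}_r\vu_n$ with $\lambda_n=1/N,\,\vu_n=\widetilde{\vphi}_n$ can be transformed into one with any sign pattern of $\lambda_n$ by flipping the corresponding $\vu_n$. This is unlike the even case, where $\vu\mapsto-\vu$ leaves ${\kronstack}_r\vu$ fixed and the uniform-sign decomposition is canonical. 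An explicit skewed sample (e.g.\ $d=1$, data $\{0,1,2,6\}$) also witnesses that entries of $\tX$ can attain either sign, since negating the data flips the odd-order central moment.

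The main obstacle lies in part~2: the quantifier ``$\forall(i_3,\ldots,i_r)\in\idx{d}$'' is slightly ambiguous. If read strictly as running over all of $\idx{d}^{r-2}$, the claim is too strong for $r\geq 4$, as one can construct $r=4$ slices with mismatched trailing coordinates that are not PSD; the natural reading consistent with the $r=2$ covariance case restricts to matched-index slices as used above. Pinning down this convention is the conceptual hurdle, and the rest reduces to bookkeeping on multiplicatively structured sums.
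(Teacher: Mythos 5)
The paper itself offers no proof of this proposition --- it simply defers to \cite{sparse_tensor_cvpr} --- so there is no in-paper argument to compare yours against; judged on its own, your proof is essentially correct and uses the reductions one would expect. Super-symmetry from commutativity of the scalar product $\prod_{j}(\widetilde{\vphi}_n)_{i_j}$, the stars-and-bars count $\binom{d+r-1}{r}$ of size-$r$ multisets from $\idx{d}$, and the identification of the $r=2$ case with the sample covariance are all fine. For odd order, combining the gauge freedom ${\kronstack}_r(-\vu)=-{\kronstack}_r\,\vu$ with an explicitly skewed one-dimensional sample correctly shows that the signs in a CP decomposition are not canonical and that odd-order central moments genuinely take either sign. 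The only small point worth a sentence is $r=1$: there $\tX$ is identically the zero tensor, so ``indefiniteness'' holds vacuously rather than by your sign-flipping argument; this is a presentational remark, not a gap.

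Your observation on part 2 is the substantive contribution, and it is right: read with the quantifier ranging over all trailing tuples, the claim fails for $r\geq 4$. Concretely, take $d=2$, $N=2$, $\vphi_1=(1,-1)^{T}$, $\vphi_2=(-1,1)^{T}$, so $\vmu=\vec{0}$ and $\widetilde{\vphi}_n=\vphi_n$; then $\tX_{:,:,1,2}=\frac{1}{2}\sum_n (\widetilde{\vphi}_n)_1(\widetilde{\vphi}_n)_2\,\widetilde{\vphi}_n\widetilde{\vphi}_n^{T}=-\frac{1}{2}\sum_n \widetilde{\vphi}_n\widetilde{\vphi}_n^{T}$, which is nonzero and negative semi-definite. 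Your repaired reading --- trailing indices matched in pairs so that the slice is $\frac{1}{N}\sum_n w_n\,\widetilde{\vphi}_n\widetilde{\vphi}_n^{T}$ with $w_n\geq 0$ --- is the one under which the statement holds, and it is in fact consistent with the paper's (admittedly loose) notation $(i_3,\cdots,i_r)\in\idx{d}$, which taken literally forces $i_3=\cdots=i_r=i$ and hence $w_n=(\widetilde{\vphi}_n)_i^{\,r-2}\geq 0$ for even $r$. Since the paper gives no proof, your flagging and resolving of this ambiguity is exactly the right thing to have done.
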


Due to the indefiniteness of tensors of odd orders and potential rank deficiency, we restrict ourselves to work with the Euclidean distance between such scatter representations. Also, as the number of unique coefficients of $\tX$ is of order $\sim\!d^r$, which is prohibitive for $r\!\geq\!3$, we propose a light-weight kernelized variant of the Euclidean distance which avoids explicit use of tensors. The following easily verifiable two results will come handy in the sequel:
\begin{proposition}
Suppose we want to evaluate the Frobenius norm between tensors $\tX, \tX^*\!\!\in\!\suptensorr{d}{r}$, then it holds that: 
\begin{equation}
||\tX\!-\!\tX^{*}||_F^2=\left<\tX,\tX\right>-2\left<\tX,\tX^*\right>+\left<\tX^*\!\!,\tX^*\right>.
\label{eq:ten_inner1}
\end{equation}
\label{prop:ten_inner1}
\end{proposition}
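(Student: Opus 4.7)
The plan is to invoke directly the coordinate definitions of the Frobenius norm and the inner product given in Section 3.1 and then perform a short algebraic expansion. The key observation is that for any tensor $\tT\in\mbr{\times_r d}$ we have
\begin{equation}
\fnorm{\tT}^2=\sum_{i_1,\ldots,i_r}\tT_{i_1,\ldots,i_r}^2=\left\langle\tT,\tT\right\rangle,
\end{equation}
so applying this identity with $\tT=\tX-\tX^{*}$ reduces the statement to verifying a bilinearity-plus-symmetry expansion of the inner product.

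The plan is first to note that the inner product defined by $\left\langle\tX,\tY\right\rangle=\sum_{i_1,\ldots,i_r}\tXnb_{i_1,\ldots,i_r}\tYnb_{i_1,\ldots,i_r}$ is, term by term, an ordinary sum of products of real numbers. Hence bilinearity in each argument and symmetry $\left\langle\tX,\tX^{*}\right\rangle=\left\langle\tX^{*},\tX\right\rangle$ follow immediately from the commutativity and distributivity of real multiplication. Using bilinearity on $\left\langle\tX-\tX^{*},\tX-\tX^{*}\right\rangle$ I would expand it as $\left\langle\tX,\tX\right\rangle-\left\langle\tX,\tX^{*}\right\rangle-\left\langle\tX^{*},\tX\right\rangle+\left\langle\tX^{*},\tX^{*}\right\rangle$, and then collapse the two middle terms using symmetry to obtain $-2\left\langle\tX,\tX^{*}\right\rangle$, which yields \eqref{eq:ten_inner1}.

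I do not anticipate any obstacle: the result is the tensor analogue of the elementary identity $\|a-b\|^2=\langle a,a\rangle-2\langle a,b\rangle+\langle b,b\rangle$ for a real inner-product space, and the super-symmetry assumption $\tX,\tX^{*}\in\suptensorr{d}{r}$ plays no role in the argument beyond ensuring that both tensors live in the same ambient space where the Frobenius inner product is well-defined. The proof is therefore only a few lines long and essentially a restatement of the definitions.
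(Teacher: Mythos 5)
Your proof is correct and follows essentially the same route as the paper, which simply vectorizes $\tX$ and $\tX^{*}$ and invokes the standard $\ell_2$ expansion $\|a-b\|_2^2=\langle a,a\rangle-2\langle a,b\rangle+\langle b,b\rangle$; your coordinate-wise bilinearity-and-symmetry argument is just a more explicit spelling-out of that same observation.
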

\vspace{-1cm}
\begin{proof}
$\tX$ and $\tX^{*}$ can be vectorized and the Frobenius norm replaced by the $\ell_2$-norm for which the above expansion is known to hold. 
\end{proof}

\begin{proposition}
Suppose $\vx,\vy\in\mbr{d}$ are two arbitrary vectors, then for an ordinal $r\!>\!0$, we have:
\begin{equation}
\left<\vx,\vy\right>^{r} = \left\langle {\kronstack}_r\,\vx, {\kronstack}_r\,\vy\right\rangle.
\label{prop:bi_linearity}
\end{equation}
Moreover, for sets of vectors $\vx_n,\vy_{n'}\in\mbr{d}$, we have:
\begin{equation}
\sum_n\sum_{n'}\left<\vx_n,\vy_{n'}\right>^{r} = \Big\langle\sum_n{\kronstack}_r\,\vx_n, \sum_{n'}{\kronstack}_r\,\vy_{n'}\Big\rangle.
\label{eq:bi_linearity2}
\end{equation}
\label{prop:bi_linearity2}
\end{proposition}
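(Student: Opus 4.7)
The plan is to handle the two displays in order, since the second is a direct consequence of the first combined with the bilinearity of the tensor inner product.

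For the first identity, I would argue by direct coordinate expansion. By definition, the $(i_1,\ldots,i_r)$-th entry of ${\kronstack}_r\,\vx$ is $\prod_{j=1}^{r} x_{i_j}$, and similarly for $\vy$. Plugging this into the tensor inner product defined in the Notations subsection gives
\begin{equation*}
\left\langle {\kronstack}_r\,\vx,\,{\kronstack}_r\,\vy \right\rangle \;=\; \sum_{i_1,\ldots,i_r} \prod_{j=1}^{r} x_{i_j} y_{i_j} \;=\; \prod_{j=1}^{r} \Bigl(\sum_{i_j} x_{i_j} y_{i_j}\Bigr) \;=\; \left<\vx,\vy\right>^{r},
\end{equation*}
where the middle step uses the fact that the $r$-fold summation factorizes across independent indices. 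This establishes \eqref{prop:bi_linearity}.

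For the second identity, I would appeal purely to linearity. The tensor inner product $\langle\cdot,\cdot\rangle$ on $\mbr{\times_r d}$ is bilinear (it is inherited from the standard Euclidean inner product after vectorization, as already observed in the proof of Proposition~\ref{prop:ten_inner1}). Pulling the finite sums $\sum_n$ and $\sum_{n'}$ outside the inner product and then applying \eqref{prop:bi_linearity} to each summand gives
\begin{equation*}
\Big\langle\sum_n{\kronstack}_r\,\vx_n,\; \sum_{n'}{\kronstack}_r\,\vy_{n'}\Big\rangle \;=\; \sum_n\sum_{n'} \left\langle {\kronstack}_r\,\vx_n,\, {\kronstack}_r\,\vy_{n'}\right\rangle \;=\; \sum_n\sum_{n'}\left<\vx_n,\vy_{n'}\right>^{r},
\end{equation*}
which is \eqref{eq:bi_linearity2}.

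There is no real obstacle here; the only point worth being careful about is the index bookkeeping in the factorization step of the first identity, where one must recognize that the single $r$-fold sum over $(i_1,\ldots,i_r)$ splits as a product of $r$ independent one-dimensional sums precisely because the summand factorizes coordinate-wise. Everything else is routine bilinearity.
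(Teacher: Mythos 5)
Your proof is correct and matches the paper's approach: the paper simply cites an external reference for the identity $\left<\vx,\vy\right>^{r} = \left\langle {\kronstack}_r\,\vx, {\kronstack}_r\,\vy\right\rangle$ and invokes bilinearity for the second display, whereas you supply the standard coordinate-wise factorization argument that the citation contains and then use the same bilinearity step. No gaps; the factorization of the $r$-fold sum into a product of $r$ independent sums is exactly the right justification.
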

\vspace{-0.7cm}
\begin{proof}
The expansion in \eqref{prop:bi_linearity} is derived in~\cite{me_tensor} while \eqref{eq:bi_linearity2} can be verified due to bilinear properties of the dot-product.
\end{proof}
\section{Proposed Approach}
\label{sec:approach}
In this section, we first formulate the problem of mixture of alignments of second- and/or higher-order scatter tensors, which precedes an exposition to our next two contributions: a weighted mixture of alignments and a kernelized approach which avoids explicit evaluations of scatters.

\subsection{Problem Formulation}
Suppose $\idx{N}$ and $\idx{N^*}\!$ are the indexes of $N$ source and $N^*\!$ target training data points. $\idx{N_c}$ and $\idx{N_c^*}\!$ are the class-specific indexes for $c\!\in\!\idx{C}$, where $C$ is the number of classes. Suppose we have feature vectors from {\em fc7} in the source network stream, one per image, and associated with them labels. Such pairs are given by $\mLam\!\equiv\!\{(\vphi_n, y_n)\}_{n\in\idx{N}}$, where $\vphi_n\!\in\!\mbr{d}$ and $y_n\!\in\!\idx{C}$, $\forall n\!\in\!\idx{N}$, as shown in Figure \ref{fig:cnn1}. For the target data, by analogy, we define pairs $\mLam^{*\!}\!\equiv\!\{(\vphi^*_n, y^*_n)\}_{n\in\idx{N}^*}$, where $\vphi^*\!\!\in\!\mbr{d}$ and $y^*_n\!\!\in\!\idx{C}$, $\forall n\!\in\!\idx{N}^*$. 
Class-specific sets of feature vectors are given as $\mPhi_c\!\equiv\!\{\vphi^c_n\}_{n\in\idx{N_c}}$ and $\mPhi_c^*\!\!\equiv\!\{\vphi^{c*}_n\}_{n\in\idx{N_c^*\!}}$, $\forall c\!\in\!\idx{C}$. Then, $\mPhi\!\equiv\!(\mPhi_1,\cdots,\mPhi_C)$ and $\mPhi^*\!\!\equiv\!(\mPhi^*_1,\cdots,\mPhi^*_C)$.
 Note that we use the asterisk symbol written in superscript (\eg~${\vphi}^*$) to denote variables associated with the target network whilst the source-related and generic variables have no such indicator. 
Below,  we formulate our problem as a trade-off between the classifier loss $\ell$ and the alignment loss $g$ which acts on the scatter tensors and related to them means:

\begin{align}
&\!\!\!\!\!\!\!\!\argmin\limits_{\;\;\substack{\mW,\vb,\mP,\mP^*\\\;\;\;\;\text{s. t. }||\vphi_n||_2^2\leq\tau,\\\;\;\;\;\;\;\;\;\,||\vphi^*_{n'}||_2^2\leq\tau,\\\;\;\;\;\forall n\in\idx{N}\!, n'\!\in\idx{N}^*}} 
\ell\!\left(\mW,\vb,\mLam\cup\mLam^*\!\right)+\lambda ||\mW||_F^2\label{eq:main_obj1}\\[-33pt]
&\qquad\qquad\quad\;\;+\!\underbrace{\frac{\sigma_1}{C}\!\!\sum_{c\in\idx{C}}\!||\tX_c\!-\!\tX_c^*||_F^2
+\!\frac{\sigma_2}{C}\!\!\sum_{c\in\idx{C}}\!||\vmu_c\!-\!\vmu_c^*||_2^2}_{g(\mPhi,\mPhi^*\!)}.\nonumber
\end{align}
For $\ell$, we use a generic loss used by CNNs, say Softmax. The matrix $\mW\!\in\!\mbr{d\times C}$ contains unnormalized probabilities (c.f. hyperplane of SVM), $\vb\!\in\!\mbr{C}$ is the bias term, and $\lambda$ is the regularization constant. Moreover, the union $\mLam\cup\mLam^*\!$ of the source and target training data reveals that we train one universal classifier for both domains\footnote{
\label{foot:classifier}
For VGG streams, we use a couple of domain-specific classifiers \eg,$\!$
$\ell\!\left(\mW,\vb,\mLam\right)\!+\!\ell\!\left(\mW^{*\!},\vb^{*\!},\mLam^{*\!}\right)\!+\!\lambda||\mW||_F^2\!+\!\lambda^{*\!}||\mW^{*\!}||_F^2\!+\!\beta'||\mW\!-\!\mW^{*\!}||_F^2$.
}.  
In Equation \eqref{eq:main_obj1}, separating the class-specific distributions is addressed by $\ell$ while bringing closer the within-class scatters of both network streams is handled by $g$ (as Figure \ref{fig:cnn_all} shows).
Specifically, our loss $g$ depends on two sets of variables $(\tX_1(\mPhi_1),\cdots,\tX_C(\mPhi_c)), (\vmu_1(\mPhi_1),\cdots,\vmu_C(\mPhi_C))$ and $(\tX^*_1(\mPhi^*_1),\cdots,\tX^*_C(\mPhi^*_C)), (\vmu^*_1(\mPhi^*_1),\cdots,\vmu^*_C(\mPhi^*_C))$ -- one set per network stream. Feature vectors $\mPhi(\mP)$ and $\mPhi^*\!(\mP^*\!)$ depend on the parameters of the source and target network streams $\mP$ and $\mP^*\!$ that we optimize over \eg, they represent coefficients of convolutional filters and weights of {\em fc} layers.
%
$\tX_c$, $\tX^*_c$, $\vmu_c$ and $\vmu^*_c$ denote the scatter tensors and means, respectively, one tensor/mean pair per network stream per class, evaluated as in \eqref{eq:scatten}. Lastly, $\sigma_1$ and $\sigma_2$ control the overall degree of the scatter and mean alignment, $\tau$ constraints the $\ell_2$-norm of feature vectors (needed if $\lambda$ is low). Derivatives of loss $g$ are given in Appendix \ref{app:der_sec_ord}.

In this work, we assume that highly non-linear CNN streams are able to rotate the within-class scatters sufficiently as dictated by our loss to yield a desired overlap of two scatters.  
Such an assumption is common in \ie~\cite{chopra_icml_workshop, xiong_eccv16}.

\subsection{Weighted Alignment Loss}

Below we propose a weighted variant of alignment loss $g$ that incorporates class-specific weights $\vzeta,\vzetabar\!\in\!\mbr{C}$ 
that adjust the degree of alignment per class between the within-class scatters as well as related to them means. As the statistical literature states that combination of moments $m\!\!=\!1,\cdots,\infty$ can capture any distribution, we combine $r'\!\!=\!2,\cdots,r$ orders ($\tX_c^{(1)}\!\!=\!\tX_c^{*(1)}\!\!=\!0$ due to data centering):
%
\begin{align}
& g^{(r)}(\mPhi,\mPhi^*\!\!\!,\{\vzeta_{r'\!}\}_{r'\!\in\idx{r}}\!,\vzetabar)\!=\!\!\frac{\sigma_1}{rC}\!\!\!\!\sum_{r'\!\in\idx{r}\!\setminus\!\{\!1\!\}}\!\!\!\sum_{\;\;c\in\idx{C}\!\!}\!\!\!\zeta_{cr'}||\tX_c^{(r')}\!-\!\tX_c^{*(r')}||_F^2\nonumber\\[-0.0cm]
&\qquad\quad\!+\!\!\frac{\sigma_2}{C}\!\!\sum_{c\in\idx{C}}\!\!\overbartwo{\zeta}_c||\vmu_c\!-\!\vmu_c^*||_2^2\!+\!\!\frac{\alpha_1}{r}\!\!\!\!\!\!\sum_{r'\!\in\idx{r}\!\setminus\!\{\!1\!\}}\!\!\!\!  \!||\vec{\zeta}_{r'}\!\!-\!\!\vOnes||_2^2\!+\!\alpha_2||\vec{\overbartwo{\zeta}}\!\!-\!\!\vOnes||_2^2,\!\label{eq:main_obj2}\\[-0.5cm]\nonumber
\end{align}
where $\alpha_1$ and $\alpha_2$ control the degree of weight deviation. To use the weighted alignment, we replace the corresponding loss in Eq. \eqref{eq:main_obj1} by the alignment loss $g$ defined in \eqref{eq:main_obj2}. Then, we additionally minimize \eqref{eq:main_obj1} over $\vzetabar$ and a set $\{\vzeta_{r'\!}\}_{r'\!\in\idx{r}\!\setminus\!\{\!1\!\}}$ that determines contributions of tensors of order $r'\!\!=\!2,\cdots,r$.

\subsection{Kernelized Alignment Loss}
\label{sec:kerapp}

Evaluating scatter tensors during the gradient descent is costly, even if using covariances ($r\!=\!2$), as the typical size feature vectors from {\em fc7} is $d\!=\!4096$. 
Below we propose an efficient kernelization of the Frobenius norm on tensors of arbitrary order $r$.

\begin{proposition}
The inner-product of scatter tensors $\tX^{(r)}\!\!$, $\tY^{(r)}\!\!\in\!\suptensorr{d}{r}$ of order $r$ from  Eq. \eqref{eq:scatten}, can be written implicitly as a sum of entries of a polynomial kernel ${\mKbb^r}\!\in\!\mbr{N\times N^*\!\!}$, where $\Kbb^r_{nn'}\!=\!\left<\vx_n\!-\!\vmu,\vy_{n'}\!\!-\!\vmu^*\!\right>^r$,  and $\vx_n\!\in\!\mbr{d}\!, \forall n\!\in\!\idx{N}$ and $\vy_{n'}\!\in\!\mbr{d}\!, \forall {n'}\!\in\!\idx{N^{*\!}}$ are some $N$ and $N^*\!$ feature vectors (that form $\tX^{(r)}\!$ and $\tY^{(r)}\!$), $\vmu$ and $\vmu^*\!$ are their means. Then:
\begin{align}
&\big\langle\tX^{(r)}\!\!,\tY^{(r)}\big\rangle\!=\!\frac{1}{NN^*\!}\!\sum_n\!\sum_{n'}\!\left<\vx_n\!\!-\!\vmu,\vy_{n'}\!-\!\vmu^*\right>^{r}\!\!=\!\frac{1}{NN^*\!}\vOnes^T{\mKbb^r}\vOnes.\nonumber\\[-12pt]
&
\label{eq:kernelised1}
\end{align}
\label{prop:kernelised1}
\end{proposition}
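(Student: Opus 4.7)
The plan is to unfold the inner product by linearity, apply the rank-one reduction from Proposition 3 (the identity $\langle {\kronstack}_r\vx,{\kronstack}_r\vy\rangle = \langle\vx,\vy\rangle^r$), and then recognize the double sum as a quadratic form in the all-ones vector against the kernel matrix.

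First I would substitute the scatter definition \eqref{eq:scatten} applied to the centered vectors: write $\tX^{(r)} = \frac{1}{N}\sum_n {\kronstack}_r(\vx_n-\vmu)$ and $\tY^{(r)} = \frac{1}{N^*}\sum_{n'}{\kronstack}_{r}(\vy_{n'}-\vmu^*)$. Because the tensor inner product $\langle\cdot,\cdot\rangle$ is bilinear (indices are summed coordinate-wise, as recalled in the notations), the scalar prefactors $1/N$ and $1/N^*$ and both summations pull out, giving
\begin{equation*}
\big\langle\tX^{(r)},\tY^{(r)}\big\rangle = \frac{1}{NN^*}\sum_{n}\sum_{n'}\big\langle {\kronstack}_r(\vx_n-\vmu),\,{\kronstack}_r(\vy_{n'}-\vmu^*)\big\rangle.
\end{equation*}

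Next I would invoke Proposition 3, specifically \eqref{prop:bi_linearity}, with the arguments $\vx_n-\vmu$ and $\vy_{n'}-\vmu^*$ in place of $\vx$ and $\vy$. This collapses each rank-one tensor inner product to the $r$-th power of the ordinary dot product, yielding the middle expression in \eqref{eq:kernelised1}. At this point only bookkeeping remains: by the definition of $\mKbb^r$ we have $\Kbb^r_{nn'} = \langle \vx_n-\vmu, \vy_{n'}-\vmu^*\rangle^r$, so the unweighted double sum $\sum_n\sum_{n'}\Kbb^r_{nn'}$ is precisely $\vOnes^T\mKbb^r\vOnes$, which delivers the right-hand side of \eqref{eq:kernelised1}.

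There is no real obstacle; the content of the proposition is essentially a bookkeeping consequence of (i) linearity of the scatter construction and (ii) the rank-one identity of Proposition 3. The only subtlety worth flagging is that \eqref{prop:bi_linearity} was stated for arbitrary vectors, so it applies equally to the centered vectors $\vx_n-\vmu$ and $\vy_{n'}-\vmu^*$ with $\vmu,\vmu^*$ being the sample means of the respective sets; nothing in the argument requires $\vmu$ to coincide with any particular statistic, only that the same $\vmu$ (resp.\ $\vmu^*$) be used in the scatter and in the kernel entry. Consequently the proof occupies essentially two display lines and is a direct corollary of Propositions 2 and 3.
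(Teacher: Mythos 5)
Your proposal is correct and takes essentially the same route as the paper: the paper's one-line proof just substitutes the centered vectors $\vx_n\!-\!\vmu$ and $\vy_{n'}\!-\!\vmu^*$ into \eqref{eq:bi_linearity2} of Proposition~\ref{prop:bi_linearity2}, which already packages together the bilinearity step and the rank-one identity \eqref{prop:bi_linearity} that you carry out separately before reading off the double sum as $\vOnes^T\mKbb^r\vOnes$. Nothing further is needed.
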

\vspace{-1cm}
\begin{proof}
Substituting $\vx_n\!-\!\vmu$ and $\vy_{n'}\!-\!\vmu^*\!$ into Proposition \ref{prop:bi_linearity2}, the proof follows.
\end{proof}

\begin{proposition}
Suppose we have polynomial kernels
${\mK^r}\!\in\!\mbr{N\times N\!}$, ${\mKb^r}\!\in\!\mbr{N^*\!\times N^*\!\!}$ and ${\mKbb^r}\!\in\!\mbr{N\times N^*\!\!}$ 
defined as $K^r_{nn'}\!=\!\left<\vx_n\!-\!\vmu,\vx_{n'}\!\!-\!\vmu\right>^r$,
$\Kb^r_{nn'}\!=\!\left<\vy_n\!-\!\vmu^*\!,\vy_{n'}\!\!-\!\vmu^*\!\right>^r$ and
$\Kbb^r_{nn'}\!=\!\left<\vx_n\!-\!\vmu,\vy_{n'}\!\!-\!\vmu^*\!\right>^r$, where $\vx_n, \vy_{n'}, \vmu, \vmu^*\!, N, N^*\!$ are defined as in Proposition \ref{prop:kernelised1}. The Frobenius norm between two scatter tensors $\tX^{(r)}\!\!,\tY^{(r)}\!\!\in\!\suptensorr{d}{r}$ of order $r$, which are defined in Eq. \eqref{eq:scatten}, can be expressed implicitly as:
\begin{align}
||\tX^{(r)}\!\!-\!\tX^{*(r)\!}||_F^2\!=\!\frac{1}{N^2}\vOnes^T \mK^r\vOnes\!+\!\frac{1}{{N^*}^2}\vOnes^T{\mKb^r}\vOnes
\!-\!\frac{2}{NN^*\!}\vOnes^T{\mKbb^r}\vOnes.
\label{prop:kernelised2}
\end{align}
%
%
\end{proposition}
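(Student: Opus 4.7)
The proof reduces to combining the two previous propositions in a direct way, so my plan is essentially a two-step bookkeeping argument.

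First, I would apply Proposition~\ref{prop:ten_inner1} (the polarization identity for the Frobenius norm) to expand
\begin{equation*}
||\tX^{(r)}-\tX^{*(r)}||_F^2 = \langle\tX^{(r)},\tX^{(r)}\rangle - 2\langle\tX^{(r)},\tX^{*(r)}\rangle + \langle\tX^{*(r)},\tX^{*(r)}\rangle.
\end{equation*}
Since this identity only requires that the two tensors live in the same inner-product space, it applies to any pair of super-symmetric tensors of the same order, including a tensor paired with itself.

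Next, I would apply Proposition~\ref{prop:kernelised1} to each of the three inner-products separately. For $\langle\tX^{(r)},\tX^{(r)}\rangle$, both arguments are built from the vectors $\{\vx_n-\vmu\}_{n\in\idx{N}}$, so the proposition gives $\tfrac{1}{N^2}\vOnes^T\mK^r\vOnes$ with $K^r_{nn'}=\langle\vx_n-\vmu,\vx_{n'}-\vmu\rangle^r$. Identically, for $\langle\tX^{*(r)},\tX^{*(r)}\rangle$ we obtain $\tfrac{1}{{N^*}^2}\vOnes^T\mKb^r\vOnes$, and for the cross-term $\langle\tX^{(r)},\tX^{*(r)}\rangle$ we obtain $\tfrac{1}{NN^*}\vOnes^T\mKbb^r\vOnes$. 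Plugging these three into the expansion yields the claimed formula.

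There is no real obstacle here: Proposition~\ref{prop:kernelised1} is stated for the ``mixed'' situation in which the two tensors are built from distinct point sets with possibly distinct means, and setting either set equal to the other (or identifying $\vmu^*=\vmu$ and $\vy_{n'}=\vx_{n'}$) recovers the self-inner-product case. The only thing worth double-checking is that the $1/N$ and $1/N^*$ normalization factors baked into the definition~\eqref{eq:scatten} of the scatter tensors propagate consistently through Proposition~\ref{prop:bi_linearity2}, so that the prefactor in Proposition~\ref{prop:kernelised1} is indeed $1/(NN^*)$ (and $1/N^2$, $1/{N^*}^2$ in the two self-terms). Once that is verified, collecting the three terms with the signs from the polarization identity gives~\eqref{prop:kernelised2} verbatim.
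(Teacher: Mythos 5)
Your proposal is correct and follows exactly the paper's route: the authors likewise prove this by combining Proposition~\ref{prop:ten_inner1} with Proposition~\ref{prop:kernelised1}, expanding the squared Frobenius norm into three inner-products and kernelizing each one. Your extra check on the $1/N^2$, $1/{N^*}^2$ and $1/(NN^*)$ prefactors is a sensible bit of diligence but does not change the argument.
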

%
\begin{proof}
Combining Proposition \ref{prop:ten_inner1} with \ref{prop:kernelised1}, the proof follows.
\end{proof}

Derivatives of \eqref{prop:kernelised2} are in Appendix \ref{app:der_ker_ord}. Equation \eqref{prop:kernelised2} can be evaluated on class-specific feature vectors and substituted directly into the loss functions in \eqref{eq:main_obj1} and \eqref{eq:main_obj2}. This way, we obtain two different regimes for evaluating the Frobenius norm on the scatter tensors: one explicit and one kernelized; both exhibiting different strengths as detailed below.

\vspace{0.05cm}
\noindent{\textbf{Complexity.}} The Frobenius norm on the scatter tensors has complexity $\bigoh((N\!+\!N^*\!+\!1)D)$, where $D\!=\!\binom{d+r-1}{r}$ as detailed in Proposition \ref{prop:tosst_prop}. The kernelized variant proposed above has complexity $\bigoh((N^2+NN^*\!+N^*\!N^*\!)(d\!+\!\rho)$, where $\rho\!\leq\!\log{r}$ estimates the complexity of ``rising $x$ to the power of $r$''. As $\rho\!\ll\!d$, its cost is negligible and can be safely left out from the above analysis.

It is easy to verify that, for the standard domain adaptation problems with $N\!=\!20$ source and $N^{*}\!\!=\!3$ target training points per class, 
$d\!=\!4096$ and $r\!=\!2$, explicit evaluations of the Frobenius norm are $\sim\!52\!\times$ slower than the proposed by us kernelized substitute. 
For the same scenario but with the scatter tensor of order $r\!=\!3$, explicit evaluations of the Frobenius norm are not tractable, as they take $\sim\!143000\!\times$ more time than the kernelized substitute, which demonstrates the clear benefit of our approach. The kernelization makes Eq. \eqref{eq:main_obj2} tractable for $r\!>\!2$.
\begin{figure}[t]
\centering
\begin{subfigure}[b]{0.13\linewidth}
\centering\includegraphics[trim=0 0 0 0, clip=true, height=1.5cm]{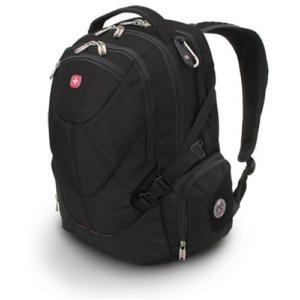}
\end{subfigure}
\begin{subfigure}[b]{0.13\linewidth}
\centering\includegraphics[trim=0 0 0 0, clip=true, height=1.5cm]{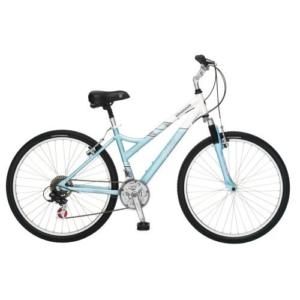}
\end{subfigure}
\begin{subfigure}[b]{0.13\linewidth}
\centering\includegraphics[trim=0 0 0 0, clip=true, height=1.5cm]{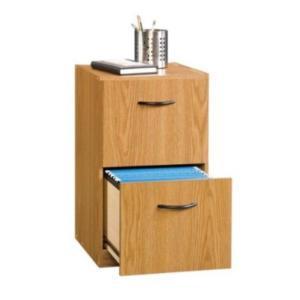}
\end{subfigure}
\begin{subfigure}[b]{0.13\linewidth}
\centering\includegraphics[trim=0 0 0 0, clip=true, height=1.5cm]{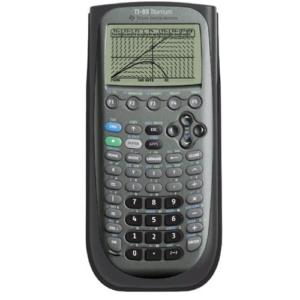}
\end{subfigure}
\begin{subfigure}[b]{0.13\linewidth}
\centering\includegraphics[trim=0 0 0 0, clip=true, height=1.5cm]{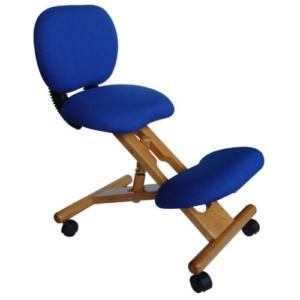}
\end{subfigure}\\
\vspace{0.1cm}
\begin{subfigure}[b]{0.13\linewidth}
\centering\includegraphics[trim=0 0 0 0, clip=true, height=1.5cm]{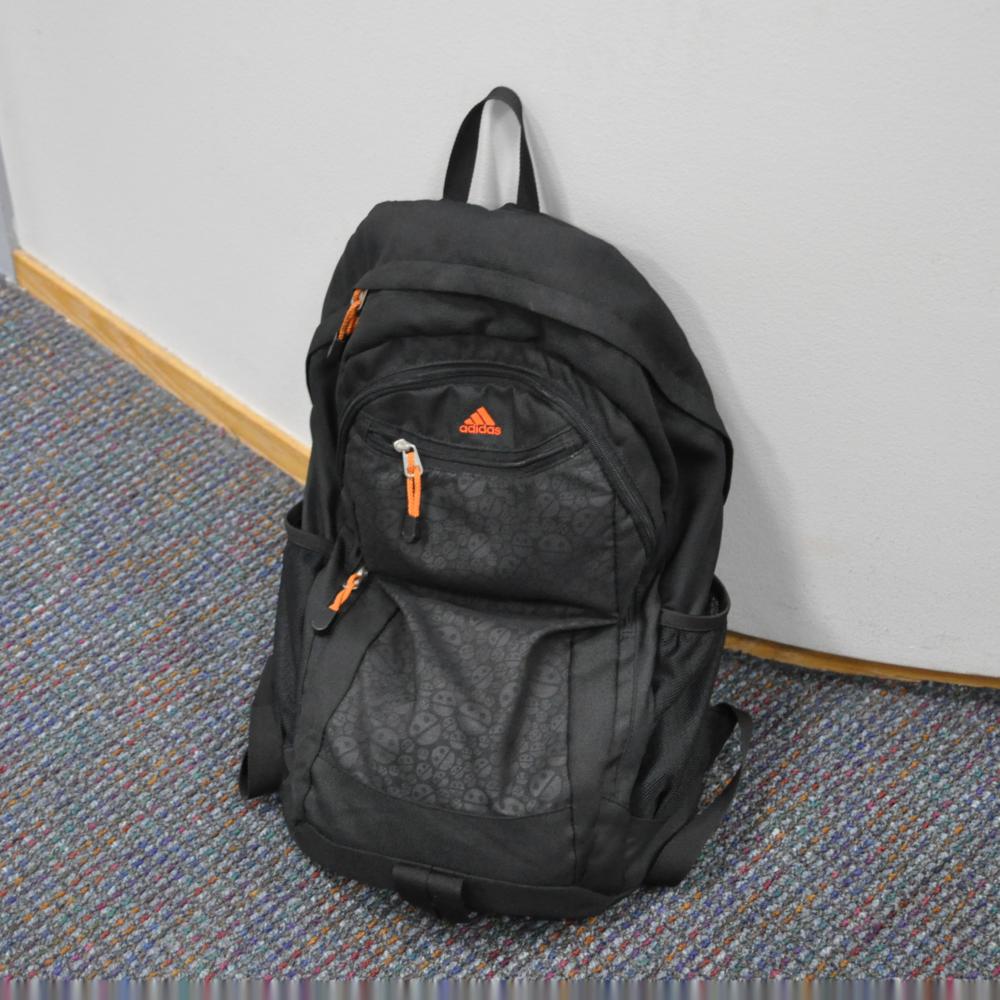}
\end{subfigure}
\begin{subfigure}[b]{0.13\linewidth}
\centering\includegraphics[trim=0 0 0 0, clip=true, height=1.5cm]{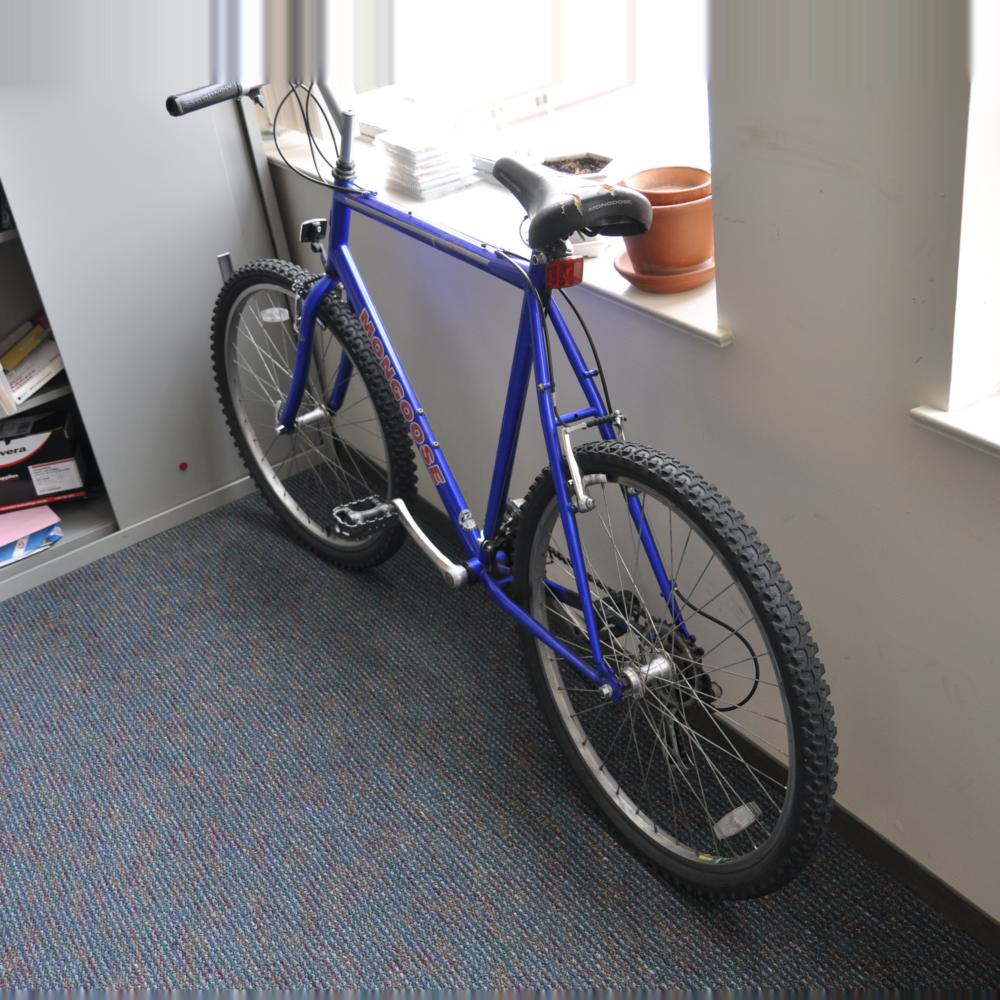}
\end{subfigure}
\begin{subfigure}[b]{0.13\linewidth}
\centering\includegraphics[trim=0 0 0 0, clip=true, height=1.5cm]{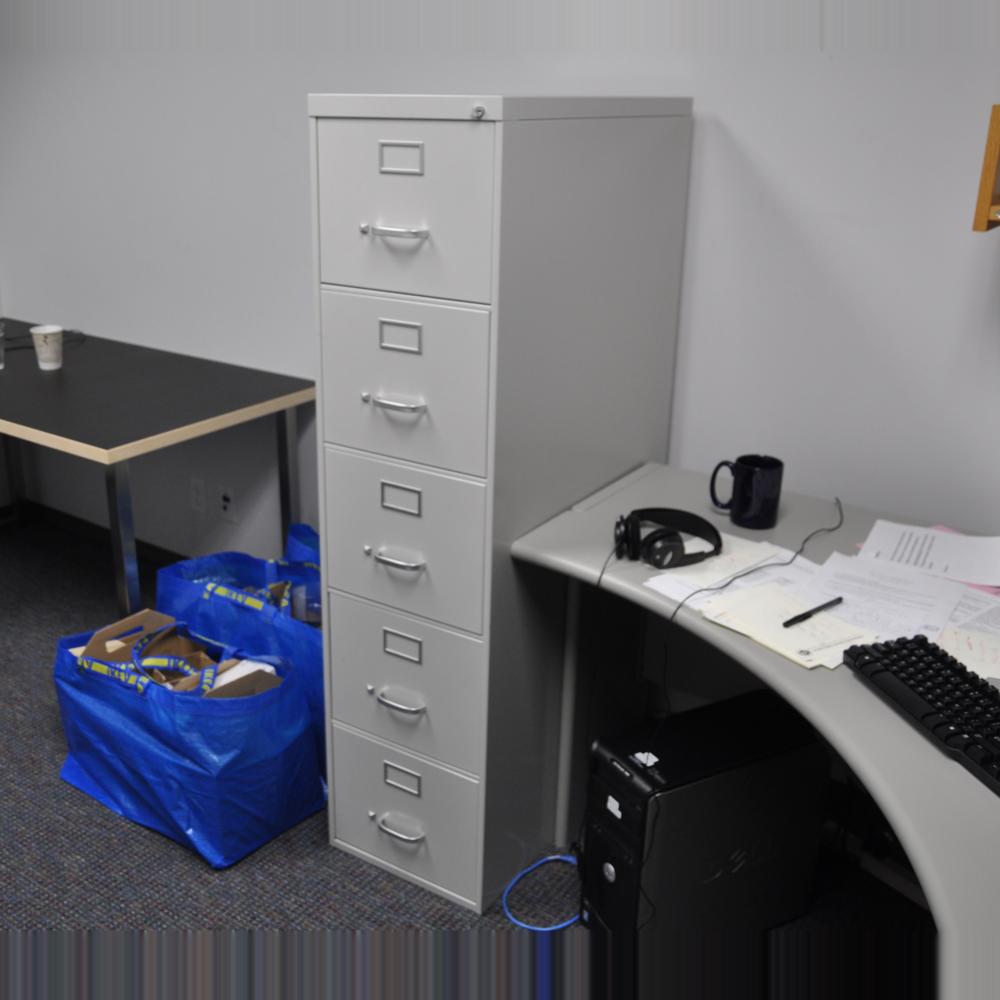}
\end{subfigure}
\begin{subfigure}[b]{0.13\linewidth}
\centering\includegraphics[trim=0 0 0 0, clip=true, height=1.5cm]{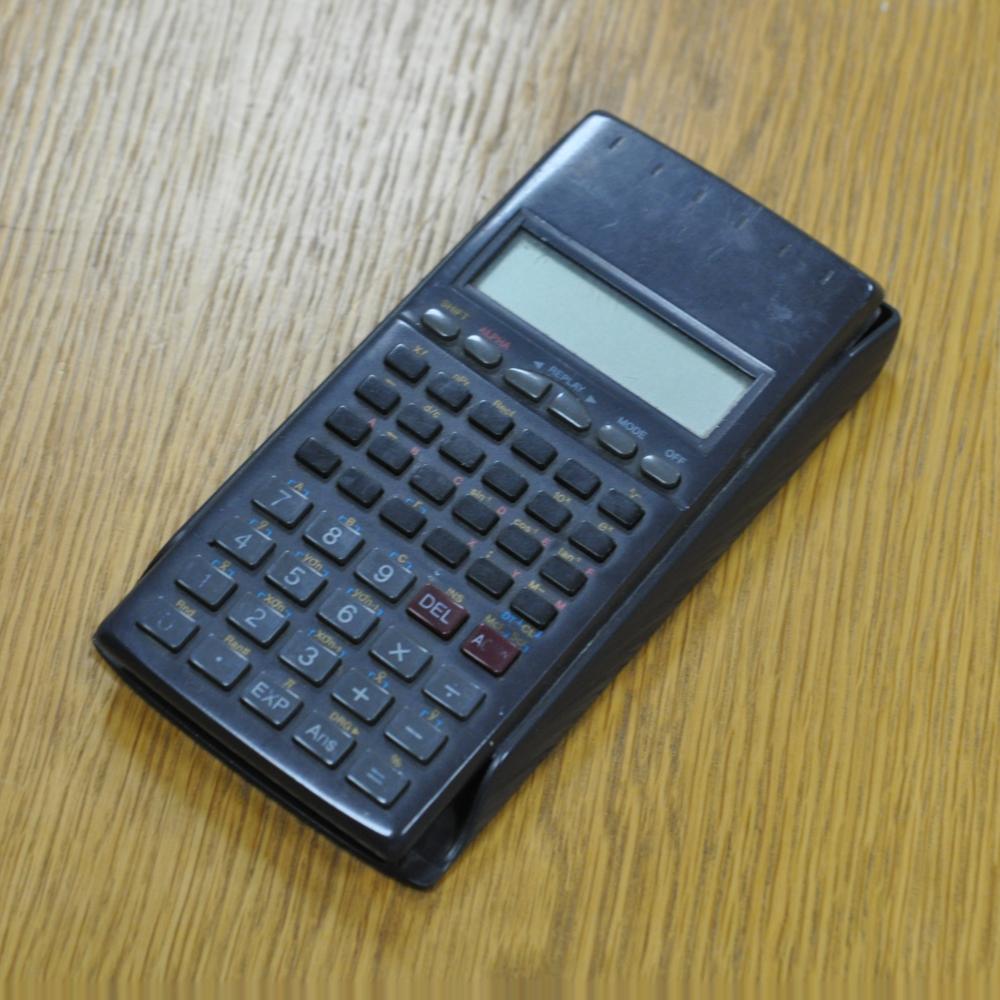}
\end{subfigure}
\begin{subfigure}[b]{0.13\linewidth}
\centering\includegraphics[trim=0 0 0 0, clip=true, height=1.5cm]{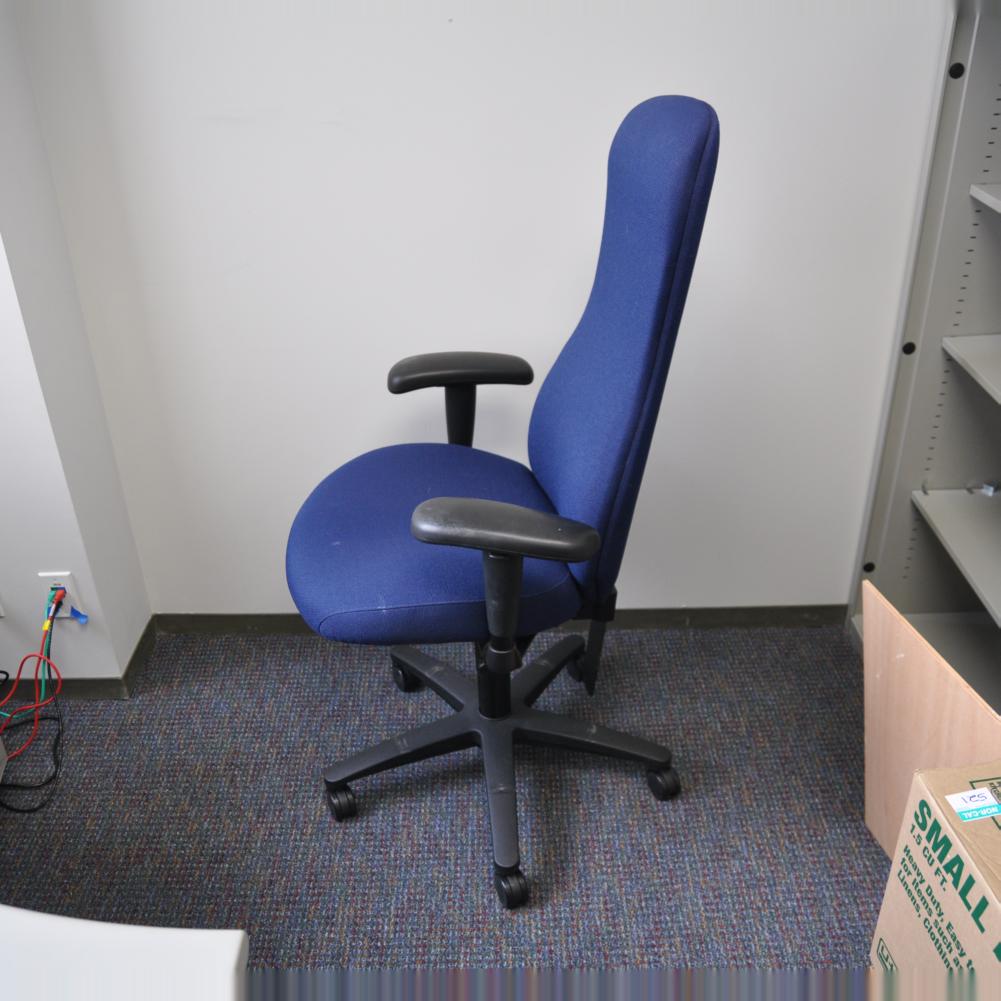}
\end{subfigure}\\
\vspace{0.1cm}
\begin{subfigure}[b]{0.13\linewidth}
\centering\includegraphics[trim=0 0 0 0, clip=true, height=1.5cm]{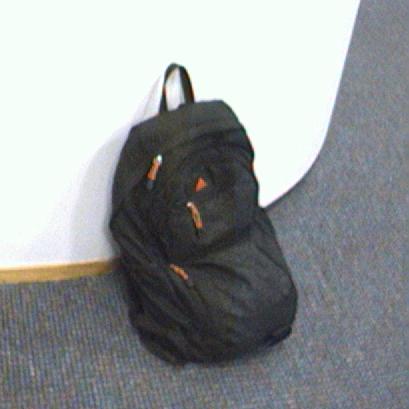}
\end{subfigure}
\begin{subfigure}[b]{0.13\linewidth}
\centering\includegraphics[trim=0 0 0 0, clip=true, height=1.5cm]{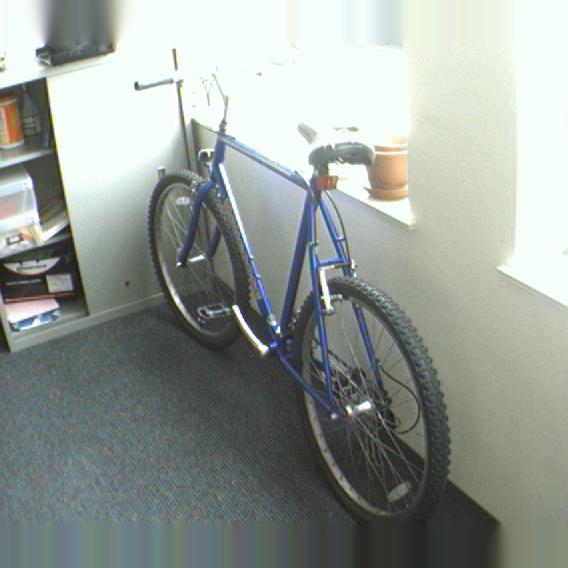}
\end{subfigure}
\begin{subfigure}[b]{0.13\linewidth}
\centering\includegraphics[trim=0 0 0 0, clip=true, height=1.5cm]{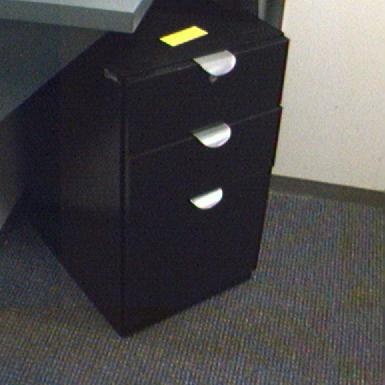}
\end{subfigure}
\begin{subfigure}[b]{0.13\linewidth}
\centering\includegraphics[trim=0 0 0 0, clip=true, height=1.5cm]{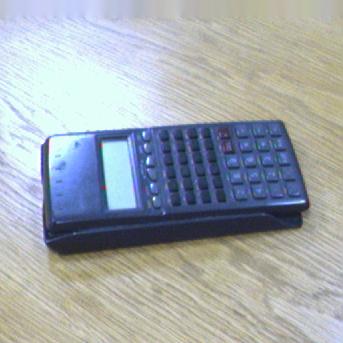}
\end{subfigure}
\begin{subfigure}[b]{0.13\linewidth}
\centering\includegraphics[trim=0 0 0 0, clip=true, height=1.5cm]{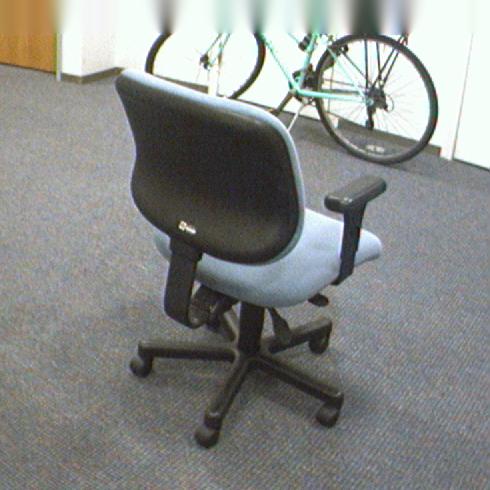}
\end{subfigure}
%
%
%
\caption{The Office dataset. Top, middle and bottom rows show examples from the Amazon, DSLR, and Webcam domains.  
}\vspace{-0.5cm}
\label{fig:office}
\end{figure}

\section{Experiments}

In this section, we present experiments demonstrating the usefulness of our framework. We start by describing datasets we use in evaluations.

\begin{table*}[!b]
\vspace{-0.2cm}
\begin{center}
{
\setlength{\tabcolsep}{0.25em}
\centering
\begin{tabular}{c c c|c c c c c c|c}
																																												&	&	& \dsAW  				& \dsAD   			& \dsWA   			& \dsWD						& \dsDA					& \dsDW 				& acc.		\\ \hline
\kern-0.6em&\kern-0.6em DLID & \cite{chopra_icml_workshop}																													& 51.9					& -							& -							& 89.9						& -							& 78.2					& 73.33\\
\kern-0.6em&\kern-0.6em DeCAF\textsubscript{6} S+T & \cite{donahue_decaf}																						& 80.7$\pm$2.3	& -							& -							& -								& -							& 94.8$\pm$1.2	& 87.75\\
\kern-0.6em&\kern-0.6em DaNN & \cite{dann_com}																																			& 53.6$\pm$0.2	& -							& -							& 83.5$\pm$0.0		& -							& 71.2$\pm$0.0	& 69.43\\
\kern-0.6em&\kern-0.6em Source CNN & \kern-0.0em\cite{tzeng_transfer}\kern-0.0em																		& 56.5$\pm$0.3	& 64.6$\pm$0.4	& 42.7$\pm$0.1	& 93.6$\pm$0.2 		& 47.6$\pm$0.1	& 92.4$\pm$0.3	& 66.23\\
\kern-0.6em&\kern-0.6em Target CNN & \kern-0.0em\cite{tzeng_transfer}\kern-0.0em																		& 80.5$\pm$0.5	& 81.8$\pm$1.0	& 59.9$\pm$0.3	& 81.8$\pm$1.0		& 59.9$\pm$0.3	& 80.5$\pm$0.5	& 74.06\\
\kern-0.6em&\kern-0.6em Source+Target CNN\kern-0.6em  & \kern+0.0em\cite{tzeng_transfer}\kern-0.0em 		& 82.5$\pm$0.9	& 85.2$\pm$1.1	& 65.2$\pm$0.7	& 96.3$\pm$0.5		& 65.8$\pm$0.5	& 93.9$\pm$0.5	& 81.48\\
\kern-0.6em&\kern-0.6em Dom. Conf.+Soft Labs.\kern-0.6em & \kern+0.0em\cite{tzeng_transfer}\kern-0.0em	& 82.7$\pm$0.8	& 86.1$\pm$1.2	& 65.0$\pm$0.5	& \textbf{97.6}$\pm$\textbf{0.2}		& 66.2$\pm$0.3	& \textbf{95.7}$\pm$\textbf{0.5}	& 82.22\\\hline
 &\kern-0.6em Source+Target CNN ({\em S+T})$\!\!\!\!\!\!\!\!\!\!\!\!$ & $\!\!\!\!\!\!\!\!$ & 82.4$\pm$2.0	& 85.5$\pm$0.9	& 65.1$\pm$1.4	& 95.8$\pm$0.8		& 66.0$\pm$1.2	& 94.3$\pm$0.6	& 81.53\\
\kern-0.6em&\kern-0.6em Second-order ({\em So})																																		& & \textbf{84.5}$\pm$\textbf{1.7} & \textbf{86.3}$\pm$\textbf{0.8}	&	\textbf{65.7}$\pm$\textbf{1.7}	&	97.5$\pm$0.7		& \textbf{66.5}$\pm$\textbf{1.0}								&	95.5$\pm$0.6	&		\textbf{82.68}\\
\end{tabular}
}
\end{center}
\vspace{-0.3cm}
\caption{Comparison of our second-order alignment loss ({\em So}) to the state of the art on the Office dataset.}
\label{tab:office}
\end{table*}

\subsection{Datasets}

\vspace{0.05cm}
\noindent{\textbf{Office dataset.}}  A popular dataset for evaluating algorithms against the effect of domain shift is the Office dataset~\cite{saenko_office} which contains 31 object categories in three domains: Amazon, DSLR and Webcam. The 31 categories
in the dataset consist of objects commonly encountered in office settings, such as keyboards, file cabinets, and laptops. The Amazon domain contains on average 90 images per class and 2817 images in total. As these images were captured from a website of online merchants, they are captured against clean background and at a unified scale. The DSLR domain contains 498 low-noise high resolution images ($4288\!\times\!2848$). There are 5 objects per category. Each object was captured from different viewpoints on average 3 times. For Webcam, the 795 images of low resolution ($640\!\times\!480$) exhibit significant noise and color as well as white balance artifacts. Otherwise, 5 objects per category were also used in the capturing process. Figure \ref{fig:office} illustrates the three domains. We distinguish the following six domain shifts: Amazon-Webcam (\dsAW), Amazon-DSLR (\dsAD), Webcam-Amazon (\dsWA), Webcam-DSLR (\dsWD), DSLR-Amazon (\dsDA) and DSLR-Webcam (\dsDW).

We evaluate across 10 randomly chosen data splits per domain shift. We follow the standard protocol for this dataset and, for each training source split, we sample 20 images per category for the Amazon domain and 8 examples per category for the DSLR and Webcam domains. From the training target splits, we sample 3 images per class per split per domain. We present results for the supervised setting and report accuracies on the remaining target images, as the the standard protocol for this dataset suggests. 

\vspace{0.05cm}
\noindent{\textbf{RGB-D-Caltech256 dataset.}} The RGB-D~\cite{lai_rgbd} and Caltech256~\cite{griffin_caltech256} datasets have been used as the source and target for evaluations of unsupervised domain adaptation problems~\cite{cvpr_rgbd-caltech,eccv_infbottleneck}. We use the 10 classes that are common between the two datasets \eg, calculator, cereal box, coffee mug, ball, tomato. We use 50 and 5/10 images per class in the source and target domains 
for the supervised setting. We test on the remaining target samples. We report the mean average accuracy over 5 data splits, that is, we select randomly the source and target data samples for each split.

\vspace{0.05cm}
\noindent{\textbf{Pascal VOC2007-TU Berlin dataset.}} Transfer from Pascal VOC2007 \cite{pascal07} to TU Berlin \cite{eitz2012hdhso} (images-to-sketches transfer) has never been attempted yet in domain adaptation to our best knowledge. We utilize 50 and 3 source and target training samples per class, respectively, and the 14 classes that are common between the source and target datasets. We perform testing on the remaining target data. We report the mean average accuracy over 5 data splits.

\subsection{Experimental Setup}

In each stream, we employ the AlexNet architecture~\cite{krizhevsky_alexnet}  which was pre-trained on the ImageNet dataset~\cite{ILSVRC15} for the best results. At the training and testing time, we use the pipelines shown in Figures \ref{fig:cnn1} and \ref{fig:cnn2}, respectively. Where stated, we use the 16-layer VGG model~\cite{simonyan_vgg} per stream to quantify the impact of different CNN models on our algorithm. We set non-zero learning rates on the fully-connected and the last two convolutional layers of the two streams.

On the RGB-D-Caltech256 dataset, we use the RGB images from Caltech256 as the target domain. In contrast to~\cite{cvpr_rgbd-caltech,eccv_infbottleneck} which use both the RGB data and depth maps as a source, we adapt our source stream based on AlexNet to use only the depth data from the RGB-D dataset -- this helps us isolate performance of our algorithm in case of distinct heterogeneous domains. As these both domains are very different from each other, 
we apply two classifiers -- one per network stream (see the footnote\textsuperscript{\ref{foot:classifier}} in Section \ref{sec:approach}).

We evaluate Second- and/or Higher-order Transfer of Knowledge ({\em So-HoT}) approaches such as: unweighted and weighted second-order alignment losses ({\em So}) and ({\em So}+$\vzeta$), the third-order loss ({\em To}) and its weighted variant ({\em To+$\vzeta$}), and combined second- and third-  ({\em So}+{\em To}+$\vzeta$) as well as fourth-order ({\em So}+{\em To}+{\em Fo}+$\vzeta$) weighted alignment losses.
The model parameters were selected by cross-validation.

\begin{table*}[!t]
{
\centering
\begin{center}
\vspace{-0.4cm}
{
\setlength{\tabcolsep}{0.3em}
\centering
\begin{tabular}{c c|c c c c c c c c c c|c}
& & sp1    & sp2   & sp3  & sp4   & sp5   & sp6   & sp7   & sp8   & sp9   & sp10  & aver. acc.  \\ \hline
\multirow{6}{*}{\rotatebox[origin=c]{90}{\dsAW}}														& \kern-0.9em{\em S+T}								& 91.5 & 87.9 & 91.5 & 89.6 & 89.0 & 87.2 & 86.2 & 87.2 & 91.2 & 86.5 & 88.76$\pm$1.9 \\
																								& \kern-0.9em{\em So}								& 91.9 & 89.3 & 91.7 & 90.6 & 89.0 & 88.2 & 87.6 & 87.6 & 91.9 & 87.2 & 89.50$\pm$1.8\\\cline{2-13}
& \kern-0.3em{\em So}+$\vzeta$ & 92.4 & 89.9 & 90.5 & 92.2 & 88.9 & 88.2 & 90.0 & 89.5 & 91.3 & 89.6 & \textbf{90.24}$\pm$\textbf{1.3}\\
& \kern-0.3em{\em To}+$\vzeta$ & 92.5 & 90.3 & 91.8 & 91.9 & 89.0 & 89.9 & 89.3 & 89.6 & 91.6 & 89.6 & \textbf{90.55}$\pm$\textbf{1.1}\\
& \kern-1.2em{\em So+To}+$\vzeta$ & 92.6 & 90.5 & 92.0 & 92.0 & 89.2 & 90.0 & 89.6 & 89.9 & 91.8 & 89.9 & \textbf{90.75}$\pm$\textbf{1.2}\\
& \kern-1.2em{\em So+To+Fo}+$\vzeta$ &93.0 & 90.7 & 92.1 & 92.9 & 89.2 & 89.9 & 89.6 & 89.9 & 92.0 & 89.7 & \textbf{90.92}$\pm$\textbf{1.3}\\\hline
\multirow{3}{*}{\rotatebox[origin=c]{90}{\dsAD}} & \kern-0.9em{\em S+T} & 90.6 & 88.9 & 89.4 & 92.4 & 90.1 & 87.2 & 91.1 & 88.2 & 90.9 & 89.4 & 89.83$\pm$1.4\\
&\kern-0.9em\em{So} & 92.4 & 92.4 & 91.1 & 92.4 & 92.9 & 89.6 & 93.4 & 91.9 & 94.1 & 92.6 & \textbf{92.26}$\pm$\textbf{1.1}\\\cline{2-13}
&\kern-0.9em\em{So+To}+$\vzeta$ & 92.7 & 92.9 & 91.6 & 92.5 & 93.3 & 89.7 & 93.7 & 91.9 & 94.0 & 93.0 & \textbf{92.52}$\pm$\textbf{1.2}\\
& \kern-1.2em{\em So+To+Fo}+$\vzeta$ & 93.1 & 93.1 & 92.0 & 92.7 & 93.3 & 89.9 & 94.1 & 91.9 & 94.0  & 93.4 & \textbf{92.73}$\pm$\textbf{1.1}\\ 
\end{tabular}
\caption{The Office dataset on VGG streams. (Top) \dsAW~and (Bottom) \dsAD~domain shifts are evaluated on second-order ({\em So}), second- ({\em So}+$\vzeta$) and third-order+weights ({\em To}+$\vzeta$), second- and third- ({\em So+To}+$\vzeta$) and fourth-order ({\em So+To+Fo}+$\vzeta$) alignment with weight learning. Our baseline fine-tuning on the combined source and target domains ({\em S+T}) is also evaluated for comparison.
\label{tab:so_weighted}}
}
\end{center}
}
\vspace{-0.6cm}
\end{table*}

\subsection{Comparison to the State of the Art}

We apply our algorithm on the Office dataset. Table~\ref{tab:office} presents results for the six domain shifts. Our second-order alignment loss ({\em So}) is compared against the baseline ({\em S+T}) for which the source and target training samples were used together to fine-tune a standard CNN network. As can be seen, our method outperforms such a baseline as well as recent approaches such as {\em Domain Confusion with Soft Labels} and fine-tuning on the source or target data, respectively. 

\vspace{0.05cm}
\noindent{\textbf{Performance on the VGG architecture.}}
To evaluate effectiveness of our algorithm on other powerful networks, we follow the same pipeline as in Figure \ref{fig:cnn_all}, except that we employ the pre-trained VGG~\cite{simonyan_vgg} in place of AlexNet~\cite{krizhevsky_alexnet}. As VGG utilizes more parameters than AlexNet, we demonstrate in Table \ref{tab:so_weighted} that applying our second-order alignment loss ({\em So}) on \dsAW~and \dsAD~improves performance compared to the baselines ({\em S+T}) by 0.74\% and 2.43\%. 
Without resorting to data augmentations, we outperform \eg~a multi-scale multi-patch CNN approach~\cite{kuzborskij_cvpr16} by 0.6\% on \dsAW. 
%

\vspace{0.05cm}
\noindent{\textbf{Weighted vs. Unweighted Alignment.}}
\begin{figure}[b]
\vspace{-0.4cm}
\centering
\parbox{0.55\textwidth}{
%
\vspace{-0.2cm}
\centering
\begin{subfigure}[b]{0.48\linewidth}
\centering\includegraphics[trim=0 0 0 0, clip=true, height=3cm]{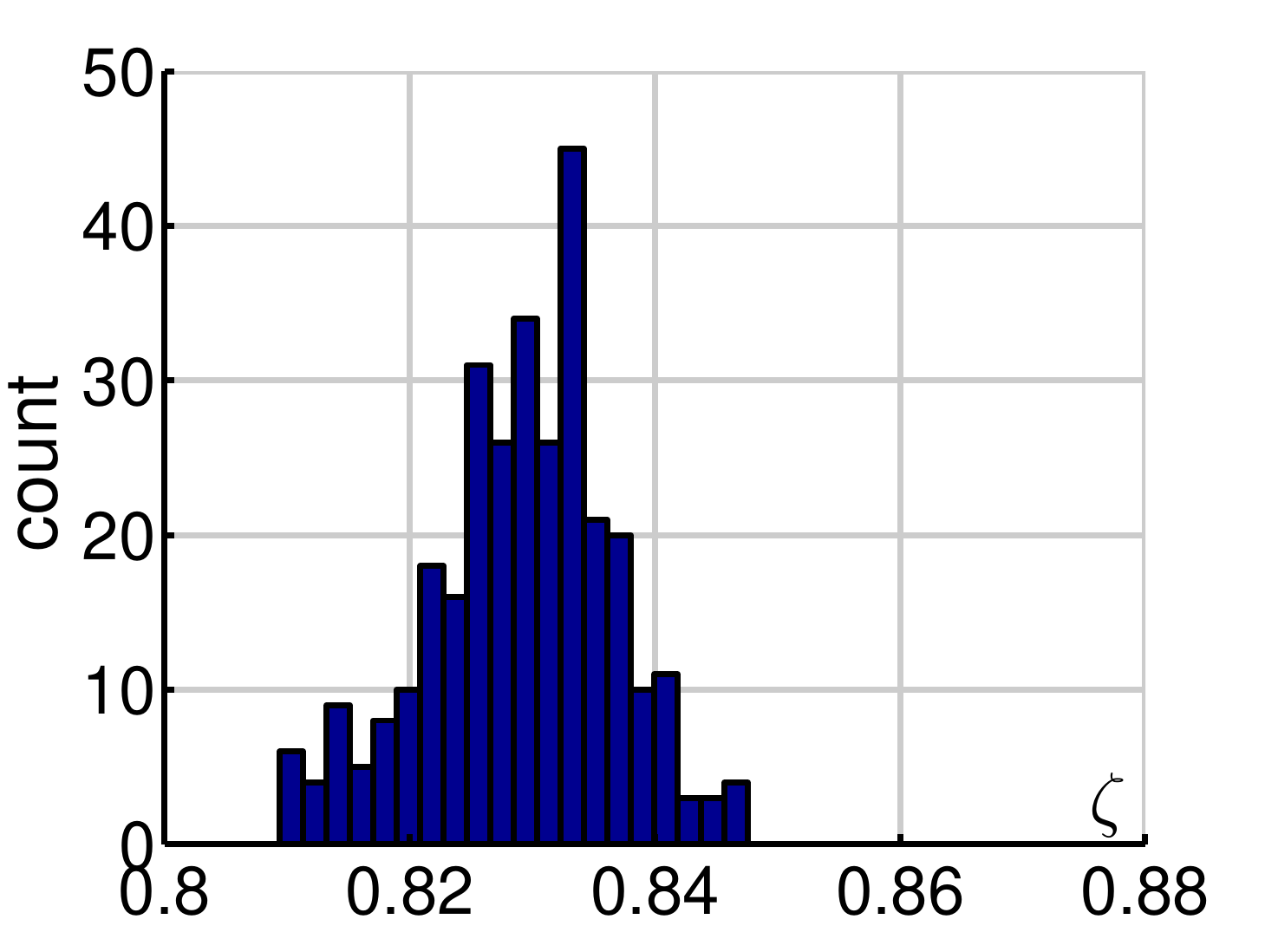}
\caption{\label{fig:hist1}}
\end{subfigure}
%
\begin{subfigure}[b]{0.48\linewidth}
\centering\includegraphics[trim=0 0 0 0, clip=true, height=3cm]{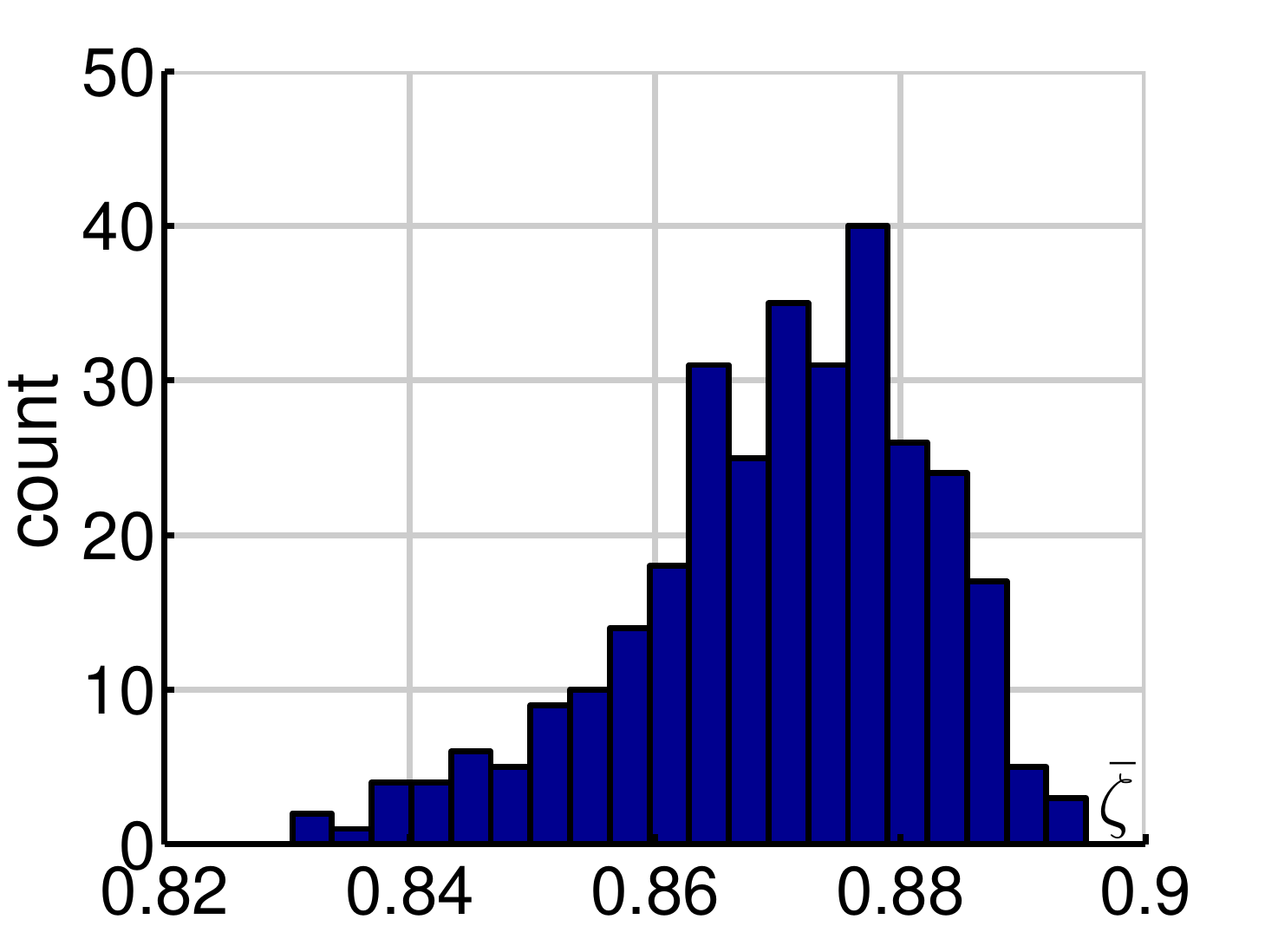}
\caption{\label{fig:hist2}}
\end{subfigure}
\vspace{-0.2cm}
\caption{Histograms of the $\vzeta$ and $\vec{\overbartwo{\zeta}}$ weights in plots \ref{fig:hist1} and \ref{fig:hist2}, learned on~\dsAW, show the level of alignment of the scatter matrices and their means according to the loss function in \eqref{eq:main_obj2}.}
\vspace{-0.3cm}
\label{fig:hists}
}
\hspace{0.2cm}
\parbox{0.42\textwidth}{
\centering
\includegraphics[trim=0 0 0 0, clip=true, height=2.7cm]{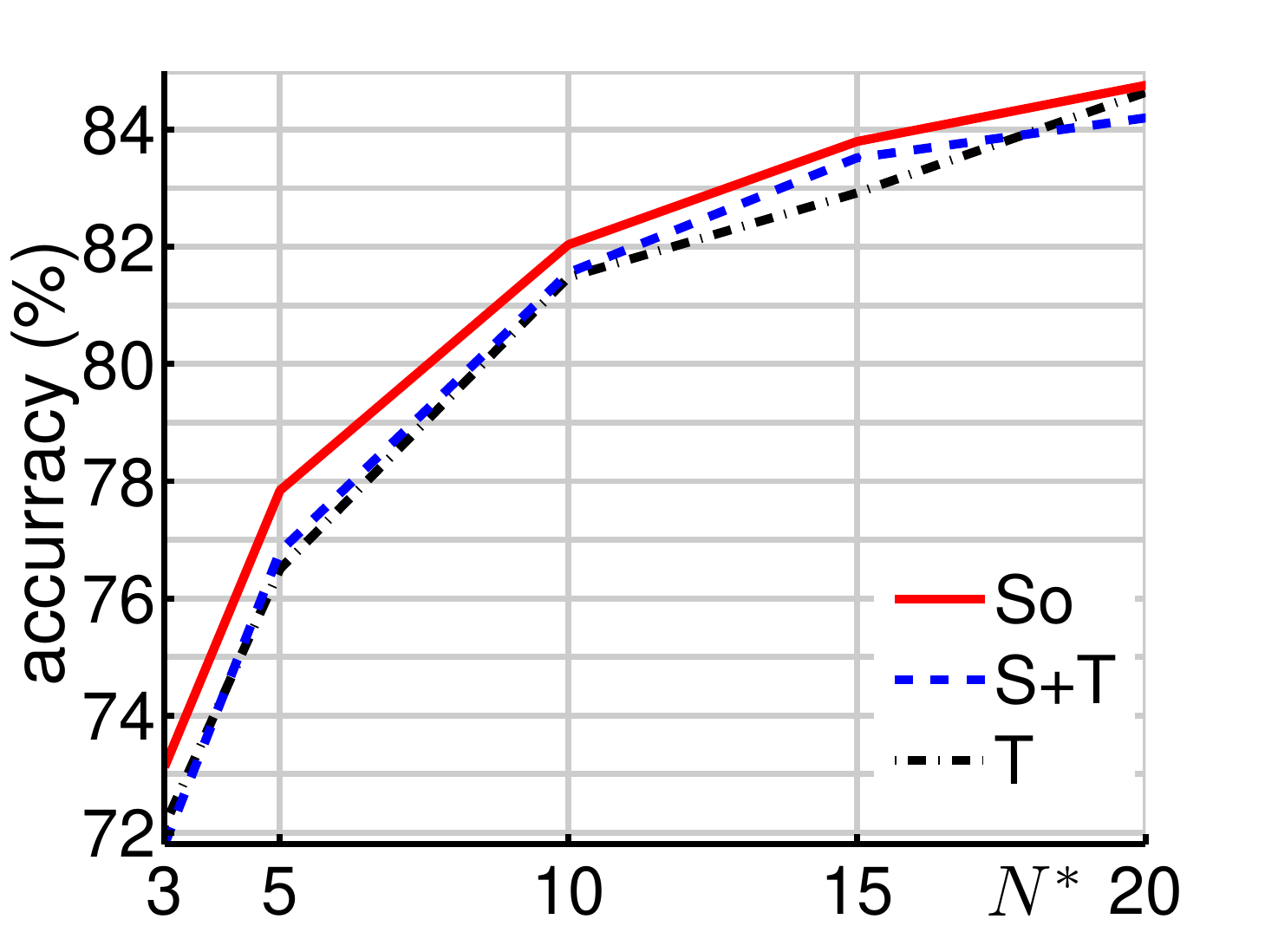}
%
\caption{Our second-order algorithm ({\em So}) vs. the baseline fine-tuning on: i) the combined source and target domains ({\em S+T}) and ii) the target domain only ({\em T}). We use RGB-D-Caltech256 (heterogeneous setup). $N^*\!$ is the number of  target tr. samples per class.}
\vspace{-0.0cm}
\label{fig:rgbd}
}
\end{figure}
%
%
%
%
%
%
In this experiment, we demonstrate the benefit of using the weighted alignment of the scatter matrices and their means on the \dsAW~and \dsAD~domain shits. Table \ref{tab:so_weighted} shows that our weighted second- ({\em So}+$\vzeta$) and third-order ({\em To}+$\vzeta$) alignment losses, introduced in Eq. \eqref{eq:main_obj2}, improve over our unweighted second-order alignment loss ({\em So}) from Eq. \eqref{eq:main_obj1} by 0.74\% and 1.05\% on \dsAW, respectively. 
Learning $\vzeta$ and $\vec{\overbartwo{\zeta}}$ can be implemented at no visible increase in computations. 

In Figure \ref{fig:hists}, we show histograms of the $\vzeta$ and $\vec{\overbartwo{\zeta}}$ weights from ({\em So}+$\vzeta$) over the 31 classes and the 10 splits. The histograms reveal that the levels of alignment of the scatter matrices and their means vary according to the Beta distributions. The means of these distributions are slightly below the desired mean value of one which indicates that,  in this experiment, $\sigma_1$ and $\sigma_2$ from Eq. \eqref{eq:main_obj2} were initialized with values larger than needed. Also, their optimal values might vary over time -- learning weights compensates for this.
 
\vspace{0.05cm}
\noindent{\textbf{Alignment of combined Second-, Third- and Fourth-order Scatter Tensors.}}
Our kernelized loss in Eq. \eqref{prop:kernelised2} admits alignment between second- and/or higher-order scatter tensors which, beyond the scale/shear and orientation, capture higher-order statistical moments. In Table \ref{tab:so_weighted}, we evaluate third-order weighted alignment loss ({\em To}+$\vzeta$), as well as combined second-, third- ({\em So}+{\em To}+$\vzeta$) and fourth-order ({\em So}+{\em To}+{\em Fo}+$\vzeta$) weighted alignments. As the order increases, the performance improves. For ({\em So}+{\em To}+{\em Fo}+$\vzeta$), we outperform ({\em So}) by 1.42\% and 2.9\% on \dsAW~and \dsAD.

\vspace{0.05cm}
\noindent{\textbf{Heterogeneous setting on RGB-D-Caltech256.}}
\noindent{\textbf{Heterogeneous setting on RGB-D-Caltech256.}} In this experiment, we verify the behavior of our second-order alignment loss ({\em So}) w.r.t. the varying number of target training samples $N^*\!\!$. Figure \ref{fig:rgbd} shows that the largest improvement of 1.24\% and 1.04\% over the baseline ({\em S+T}) is obtained for a small number $N^*\!\!=\!3$ and $N^*\!\!=\!5$, respectively. 
%
%
%
As $N^*\!$ increases, the improvement over baselines becomes smaller. Such a trend is consistent with other works on domain adaptation \cite{tommasi_cvpr10}. In some cases, the baseline ({\em S+T}) performs worse than the fine-tuning on target only ({\em T}) which is known as so-called {\em negative transfer} \cite{tommasi_cvpr10}. For all $3\!\leq\!N^*\!\!\leq\!20$, our ({\em So}) outperforms baselines ({\em S+T}) and ({\em T}) which demonstrates robustness of our approach.

\vspace{0.05cm}
\noindent{\textbf{Heterogeneous setting on Pascal VOC2007-TU Berlin.}} %
Table \ref{tab:voc-sketch} shows results on transfer from Pascal VOC2007 \cite{pascal07} to TU Berlin \cite{eitz2012hdhso} (images-to-sketches transfer). These dataset have never been used together in domain adaptation. We utilize 50 and 3 source and target training samples per class, respectively, and the 14 classes that are common between the source and target datasets. We use AlexNet streams in this experiment. As demonstrated in the table, our second-order approach ({\em So}) and the baselines ({\em S+T}) and ({\em T}) yield \textbf{63.4}, 62.66 and 62.46\%, respectively. 
 
\vspace{0.05cm}
\noindent{\textbf{Comparisons to CORAL on the Office dataset.}} %
To compare ({\em So}) to CORAL \cite{frustrating_domain_return}, we modified our code to align second-order marginal statistics ({\em M}) in the supervised setting. 
For AlexNet and \dsAW~, ({\em M}) scores $82.6\%$ vs. baseline ({\em S+T}) of $82.4\%$ but is below $84.5\%$ from our ({\em So}). 
On \dsDW~, ({\em M}) gave $94.6\%$ vs. $95.5\%$ from our ({\em So}). 
On \dsWD~, ({\em M}) gave $95.9\%$ vs. $97.5\%$ from our ({\em So}). 

\begin{table}[t]
\begin{center}
{
\setlength{\tabcolsep}{0.3em}
\centering
\begin{tabular}{c|c c c c c|c}
					& sp1   & sp2   & sp3   & sp4   & sp5 & average acc.  \\ \hline
{\em S+T}	& 58.86 & 63.43 & 63.14 & 59.14 & 68.71 & 62.66\\ 
{\em T}	& 59.86 & 63.43 & 64.14 & 57.86 & 67.0  & 62.46\\                     
{\em So}	& 60.57 & 63.28 & 64.28 & 59.14 & 69.71 & \textbf{63.40} \\ \hline
\end{tabular}
}
\end{center}
\vspace{-0.2cm}
\caption{Pascal VOC2007-TU Berlin dataset. We use 5 splits and report accuracies on our method ({\em So}) vs. baselines ({\em S+T}) and ({\em T}).}
\label{tab:voc-sketch}
\vspace{-0.4cm}
\end{table}

\section{Conclusions}
\begin{figure}[b]
\vspace{-0.4cm}
\centering
\hspace{0.1cm}
\begin{subfigure}[b]{0.3\linewidth}
\centering\includegraphics[trim=0 0 0 0, clip=true, height=3cm]{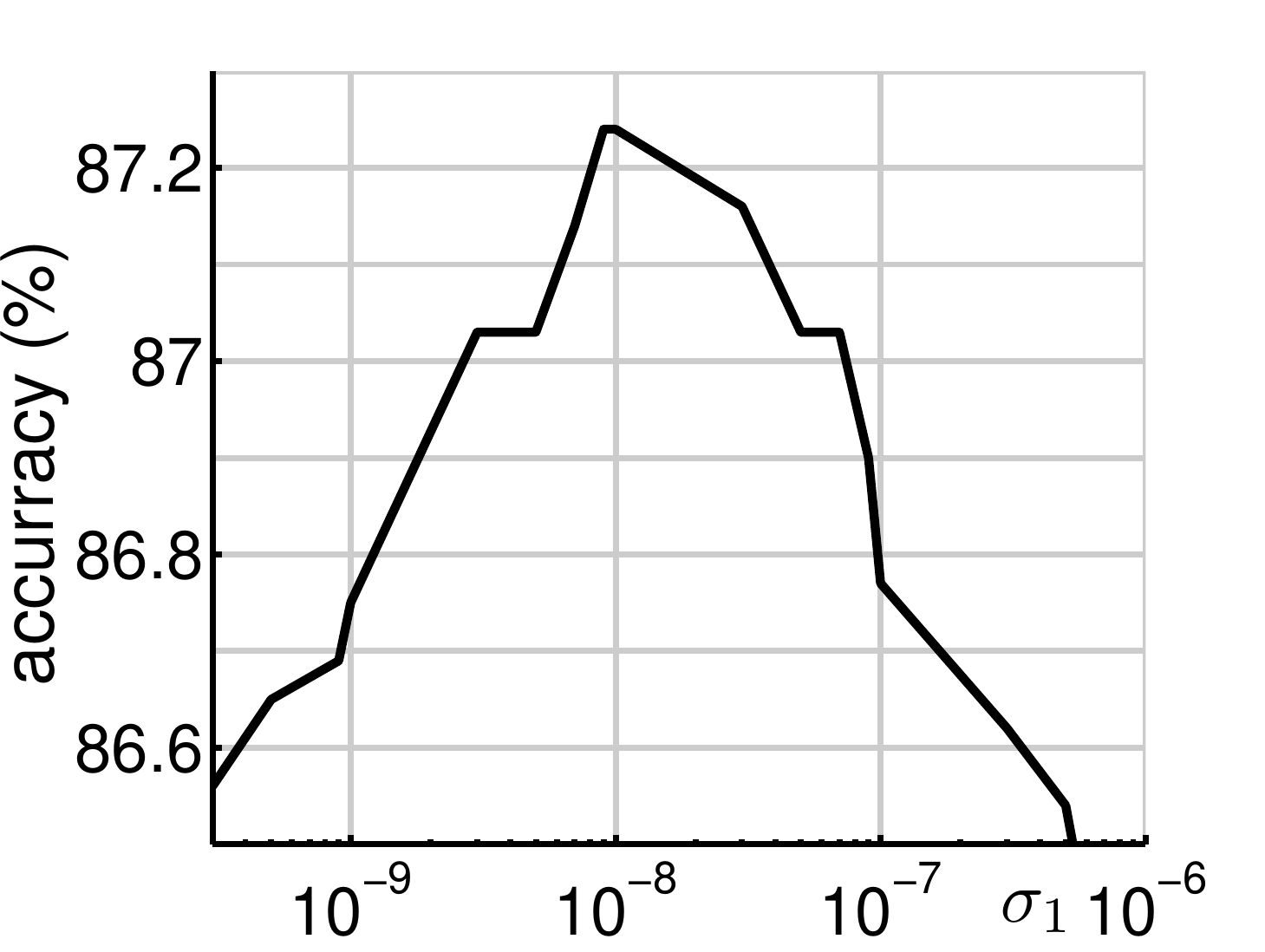}
\caption{\label{fig:sigma1}}
\end{subfigure}
\begin{subfigure}[b]{0.3\linewidth}
\centering\includegraphics[trim=0 0 0 0, clip=true, height=3cm]{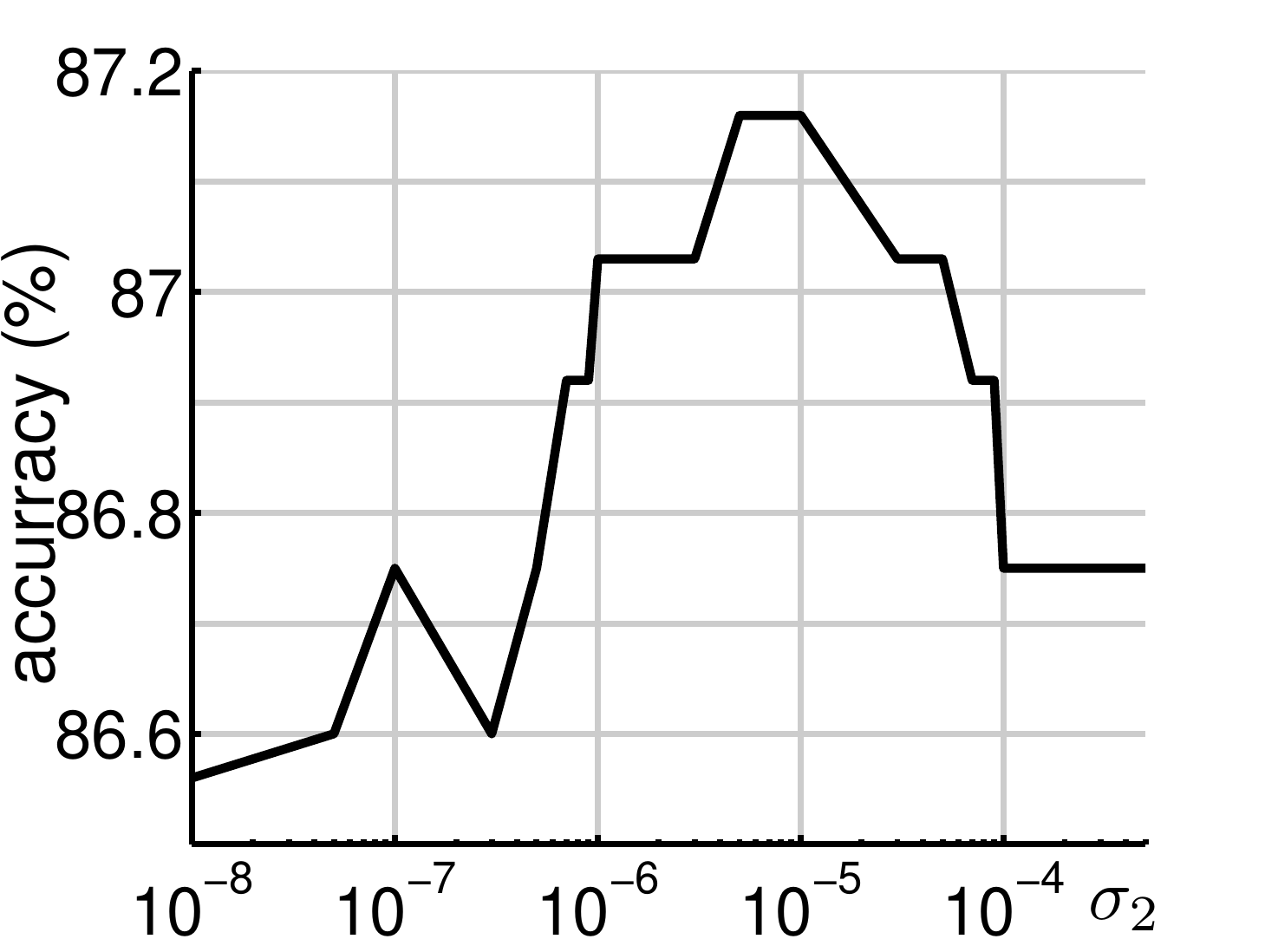}
\caption{\label{fig:sigma2}}
\end{subfigure}
\vspace{-0.2cm}
\caption{Performance of our second-order alignment loss ({\em So}) w.r.t. parameters $\sigma_1$ (\ref{fig:sigma1}) and $\sigma_2$ (\ref{fig:sigma2}) on the~\dsAW~domain shift (split {\em sp1} is used). Note the logarithmic scale.}
\vspace{-0.1cm}
\label{fig:sigmas}
\end{figure}

We have presented an approach to domain adaptation by partial alignment of the within-class scatters to discover the commonality. The state-of-the-art results we obtain suggest that our simple strategy is effective despite challenges of domain adaptation. Moreover, the presented weighted approach and kernelized alignment loss improve the results and computational efficiency. Our method can be easily extended to multiple domains and other network architectures. Our supplementary material presents additional results.

\vspace{0.05cm}
\noindent{\textbf{Acknowledgements.}} We thank Dr. Hongping Cai and Dr. Peter Hall for our early discussions on the alignment loss. We thank NVIDIA for the donation of GPUs.

\vspace{-0.4cm}
\begin{appendices}
\vspace{-0.1cm}

\section{Sensitivity to parameters $\sigma_1$ and $\sigma_2$}
Below we evaluate sensitivity of our second-order alignment loss ({\em So}) to the parameter $\sigma_1$ which determines the level of scatter alignment (rotation and scale/shear) as well as $\sigma_2$ which determines the level of alignment of means.

Figure \ref{fig:sigmas} shows that, as $\sigma_1\!\shortrightarrow\!0$ and $\sigma_2\!\shortrightarrow\!0$, our algorithm converges to the baseline fine-tuning on the combined source and target domains ({\em S+T}) which yielded 85.9\% accuracy for split {\em sp1}. Moreover, the overall performance is stable \ie, within $\pm$0.2\% accuracy, for a large range of 
values \eg, $5e\!-\!9\!\leq\!\sigma_1\!\leq\!5e\!-\!8$ and $1e\!-\!6\!\leq\!\sigma_2\!\leq\!5e\!-\!5$.

\section{Derivatives of the alignment loss $g$ w.r.t. the feature vectors}
\label{app:der_sec_ord}

Suppose $\mPhi\!=\![\vphi_1,\cdots,\vphi_N]$ and $\mPhi^{*}\!\!=\![\vphi^*_!\!,\cdots,\vphi^*_N\!]$ are some feature vectors of quantity $N$ and $N^{*\!}$, respectively, which are used to evaluate $\cov$ and $\cov^{*}\!$. 
For $r\!=\!2$, we have to first compute the derivative of the covariance matrix $\cov$ w.r.t. $\vphi_{m'n'}$. To do so, we proceed by computing derivatives of: i) the autocorrelation matrix in \eqref{eq:der_auto} and ii) the outer product of means $\vmu$ in \eqref{eq:der_meanout} and \eqref{eq:der_meanout2}:
\begin{align}
& \!\!\!\frac{\partial\sum_n\!\vphi_n\vphi_n^T}{\partial \phi_{m'n'}}\!=\!\vj_{m'}\vphi_{n'}^T\!+\!\vphi_{n'}\vj_{m'}^T,\label{eq:der_auto}\\
& \!\!\!\frac{\partial\vmu\vmu^T}{\partial \mu_{m'}}\!=\!\vj_{m'}\vmu^T\!+\!\vmu\vj_{m'}^T,\label{eq:der_meanout}\\
& \!\!\!\frac{\partial\vmu\vmu^T}{\partial\phi_{m'n'}}\!=\!\sum_m\!\frac{\partial\vmu\vmu^T}{\partial \mu_m}\frac{\partial \mu_m}{\partial\phi_{m'n'}}\!=\!\frac{1}{N}\!\left(\vj_{m'}\vmu^T\!+\!\vmu\vj_{m'}^T\right),\label{eq:der_meanout2}
\end{align}
where $\vj_{m'}$ is a vector of zero entries except for position $m'$ which is equal one. 
Putting together \eqref{eq:der_auto}, \eqref{eq:der_meanout} and \eqref{eq:der_meanout2} yields the derivative of $\cov$ w.r.t. $\vphi_{m'n'}$:
\begin{align}
& \frac{\partial\left(\frac{1}{N}\!\sum_n\!\vphi_n\vphi_n^T\right)\!-\!\vmu\vmu^T}{\partial \phi_{m'n'}}\!=\!\frac{1}{N}\!\left(\vj_{m'}\left(\vphi_{n'}\!-\!\vmu\right)^T\!+\!\left(\vphi_{n'}\!-\!\vmu\right)\vj_{m'}^T\right).\label{eq:der_cov}
\end{align}

\vspace{0.05cm}
\noindent{\textbf{The derivatives}} of $||\cov\!-\!\cov^{*}||_F^2$ w.r.t. covariance $\cov$ as well as $\vphi_{m'n'}$ and $\vphi^*_{m'n'}$ are provided below:
\begin{align}
& \frac{\partial ||\cov\!-\!\cov^{*}||_F^2}{\partial\cov}\!=\!2\left(\cov\!-\!\cov^{*}\right)\\
& \frac{\partial ||\cov\!-\!\cov^{*}||_F^2}{\partial\phi_{m'n'}}\!=\!\sum_{m,n}\!\frac{\partial ||\cov\!-\!\cov^{*}||_F^2}{\partial\Sigma_{mn}}\left(\frac{\partial\cov}{\partial \phi_{m'n'}}\right)_{mn}\nonumber\\
&\;=\!\frac{2}{N}\!\sum_{m,n}\!\left(\!\cov\!-\!\cov^{*}\!\right)_{mn}\left(\vj_{m'}\left(\vphi_{n'}\!-\!\vmu\right)^T\!\!+\!\left(\vphi_{n'}\!-\!\vmu\right)\vj_{m'}^T\right)_{mn}\nonumber\\
&\;=\!\frac{4}{N}\!\left(\!\cov_{m',:}\!-\!\cov_{m',:}^{*}\right)\!\left(\vphi_{n'}\!-\!\vmu\right).
\end{align}
\noindent{\textbf{The derivatives}} of $||\cov\!-\!\cov^{*}||_F^2$ w.r.t. $\mPhi$ and $\mPhi^{*}\!$ are:
\begin{align}
& \frac{\partial ||\cov\!-\!\cov^{*}||_F^2}{\partial\mPhi}\!=\!\frac{4}{N}\!\left(\!\cov\!-\!\cov^{*}\!\right)\!\left(\mPhi\!-\!\vmu\vOnes^T\right),\\
& \frac{\partial ||\cov\!-\!\cov^{*}||_F^2}{\partial\mPhi^{*}}\!=\!-\frac{4}{N^*}\!\left(\!\cov\!-\!\cov^{*}\!\right)\!\left(\mPhi^{*}\!-\!\vmu^{*}\vOnes^T\right).
\end{align}

\noindent{\textbf{The derivatives}} of $||\vmu\!-\!\vmu^{*}||_2^2$ w.r.t. $\vmu$, $\vphi_n$ and $\vphi^{*}_{n'}$ are:
\begin{align}
& \!\!\!\!\!\frac{\partial ||\vmu\!-\!\vmu^{*}||_2^2}{\partial\vmu}\!\!=\!2\left(\vmu\!-\!\vmu^{*}\right),\\
& \!\!\!\!\!\frac{\partial ||\vmu\!-\!\vmu^{*}||_2^2}{\partial\vphi_{n'}}\!\!=\!\!\frac{2\left(\vmu\!-\!\vmu^{*}\right)}{N},\;\frac{\partial ||\vmu\!-\!\vmu^{*}||_2^2}{\partial\vphi_{n'}^{*}}\!\!=\!\!\frac{2\left(\vmu\!-\!\vmu^{*}\right)}{N^*}.\!\!
\end{align}

\section{Kernelized derivative of the Frobenius norm between tensors w.r.t. the feature vectors}
\label{app:der_ker_ord}

Suppose that some feature vectors $\mPhi\!=\![\vphi_1,\cdots,\vphi_N]$ and $\mPhi^{*\!}\!=\![\vphi^*_!\!,\cdots,\vphi^{*\!}_N]$  are given in quantities $N$ and $N^{*\!}$ 
and that the Frobenius norm between tensors $\tX^{(r)}$ and $\tY^{(r)}$ of order $r\!\geq\!1$ build from $\mPhi$ and $\mPhi^{*\!}$ is being evaluated. Then, the derivative of Equation \eqref{prop:kernelised2} w.r.t. feature vector $\vphi_{n^\ddag\!}$ becomes:
\begin{align}
&\frac{\partial ||\tX^{(r)}\!-\!\tX^{*(r)}||_F^2}{\partial\vphi_{n^\ddag\!}}\!=\!\frac{1}{N^2}r\sum\limits_{n=1}^N\sum\limits_{n'=1}^{N}{K_{nn'}^{r-1}}\frac{\partial K_{nn'}}{\partial \vphi_{n^\ddag\!}}\nonumber\\
&\qquad\qquad\qquad\qquad\!\!-\frac{2}{NN^{*\!}}r\sum\limits_{n=1}^N\sum\limits_{n'=1}^{N^*}{\Kbb_{nn'}^{r-1}}\frac{\partial \Kbb_{nn'}}{\partial \vphi_{n^\ddag\!}},\label{eq:third_ord_inner_df1}
\end{align}
where
\begin{align}
&\frac{\partial K_{nn'}}{\partial \vphi_{n^\ddag\!}}\!=\!\frac{\partial\left<\vphi_n,\vphi_{n'}\right>}{\partial\vphi_{n^\ddag\!}}\!-\!\frac{\partial\left<\vmu,\vphi_{n'}\right>}{\partial\vphi_{n^\ddag\!}}\!-\!\frac{\partial\left<\vphi_n,\vmu\right>}{\partial\vphi_{n^\ddag\!}}\!+\!\frac{\partial\left<\vmu,\vmu\right>}{\partial\vphi_{n^\ddag\!}}\nonumber\\
&=\!\left \{
  \begin{aligned}
    &\vphi_{n'}&& &&\frac{\vphi_{n'}}{N}&& &(\vmu\!+\!\frac{\vphi_{n}}{N})&& &\!\!\!\!, &&\!\!\!\!\text{if}\,n\!=\!{n^\ddag\!}, {n'}\!\!\neq\!{n^\ddag\!} \\
    &\vphi_n&&\raisebox{-2.5\height}{$\!\!\!\!-\!\!\!\!$}&&(\vmu\!+\!\frac{\vphi_{n'}}{N})&&\raisebox{-2.5\height}{$\!\!\!\!-\!\!\!\!$}&\frac{\vphi_{n}}{N}&&\raisebox{-1.7\height}{$\!\!\!\!+\frac{2}{N}\vmu$}&\!\!\!\!, &&\!\!\!\!\text{if}\,n\!\neq\!{n^\ddag\!}, {n'}\!\!=\!{n^\ddag\!} \\[-0.2cm]
    &2\vphi_{n}&& &&(\vmu\!+\!\frac{\vphi_{n'}}{N})&& &(\vmu\!+\!\frac{\vphi_{n}}{N})&& &\!\!\!\!, &&\!\!\!\!\text{if}\,n\!=\!{n^\ddag\!}, {n'}\!\!=\!{n^\ddag\!} \\
    &0&& &&\frac{\vphi_{n'}}{N}&& &\frac{\vphi_{n}}{N}&& &\!\!\!\!, &&\!\!\!\!\text{if}\,n\!\neq\!{n^\ddag\!}, {n'}\!\!\neq\!{n^\ddag\!},\label{eq:third_ord_inner_df2}
  \end{aligned} \right.
\end{align}
\vspace{-0.5cm}
%
%
%
%
\begin{align}
&\frac{\partial \Kbb_{nn'}}{\partial \vphi_{n^\ddag\!}}\!=\!\frac{\partial\left<\vphi_n,\vphi^*_{n'}\right>}{\partial\vphi_{n^\ddag\!}}\!-\!\frac{\partial\left<\vmu,\vphi^*_{n'}\right>}{\partial\vphi_{n^\ddag\!}}\!-\!\frac{\partial\left<\vphi_n,\vmu^*\right>}{\partial\vphi_{n^\ddag\!}}\!+\!\frac{\partial\left<\vmu,\vmu^*\right>}{\partial\vphi_{n^\ddag\!}}\nonumber\\
&=\!\left \{
  \begin{aligned}
&\vphi^*_{n'}&&                                                         && &\vmu^*&& &, &&\!\!\!\!\text{if}\,n\!=\!{n^\ddag\!}, {n'}\!\!\neq\!{n^\ddag\!} \\
&0           &&\raisebox{-1.65\height}{$\!\!\!\!-\frac{1}{N}\vphi^*_{n'}$}&& \raisebox{-2.4\height}{$-$} & 0 && \raisebox{-1.7\height}{$\!\!\!\!+\frac{1}{N}\vmu^*$}&, &&\!\!\!\!\text{if}\,n\!\neq\!{n^\ddag\!}, {n'}\!\!=\!{n^\ddag\!} \\[-0.2cm]
&\vphi^*_{n}&& && &\vmu^*&& &, &&\!\!\!\!\text{if}\,n\!=\!{n^\ddag\!}, {n'}\!\!=\!{n^\ddag\!} \\
&0          && && &0&& &, &&\!\!\!\!\text{if}\,n\!\neq\!{n^\ddag\!}, {n'}\!\!\neq\!{n^\ddag\!}.
\end{aligned} \right.\label{eq:third_ord_inner_df3}\nonumber\\[-0.5cm]
\end{align}
Putting together Equations \eqref{eq:third_ord_inner_df1}, \eqref{eq:third_ord_inner_df2} and \eqref{eq:third_ord_inner_df3} and setting $q\!=\!r\!-\!1$ yields the derivatives w.r.t. matrices 
$\mPhi$ and $\mPhi^{*\!}$:
\begin{align}
&\frac{\partial ||\tX^{(r)}\!-\!\tX^{*(r)}||_F^2}{\partial\mPhi}\!=\frac{2}{N^2}r\mPhi\left(\mKro^T\!\!\!-\!\frac{1}{N}(\vOnes^T\mKro)^T\vOnes^T\right)\nonumber\\
&+\!\frac{2r\vmu}{N^2}\!\left(\frac{1}{N}\vOnes^T\mKro\vOnes-\vOnes^T\mKro\right)
\!-\!\frac{2r\mPhi^{*\!}}{NN^*}\!\left(\mKbbro^T\!\!\!-\!\frac{1}{N}(\mKbbro^T\vOnes)\vOnes^T\right)\nonumber\\
&+\!\frac{2}{NN^*}r\vmu^*\left(\vOnes^T\mKbbro^T\!\!\!-\!\frac{1}{N}\vOnes^T\mKbbro^T\vOnes\right)\!\!
\end{align}
and
\begin{align}
&\frac{\partial ||\tX^{(r)}\!-\!\tX^{*(r)}||_F^2}{\partial\mPhi^*}\!=\frac{2}{{N^*}^2}r\mPhi^*\left(\mKbro^T\!\!\!-\!\frac{1}{N^*}(\vOnes^T\mKbro)^T\vOnes^T\right)\nonumber\\
&+\!\frac{2r\vmu^{*\!}}{{N^*}^2}\!\left(\frac{1}{N^*}\vOnes^T\mKbro\vOnes-\vOnes^T\mKbro\right)
\!-\!\frac{2r\mPhi}{NN^*}\!\left(\mKbbro\!-\!\frac{1}{N^*}(\mKbbro\vOnes)\vOnes^T\right)\nonumber\\
&+\!\frac{2}{NN^*}r\vmu\left(\vOnes^T\mKbbro\!-\!\frac{1}{N^*}\vOnes^T\mKbbro\vOnes\right).
\end{align}

\end{appendices}


{\small

}

\end{document}